\DeclareMathOperator*{\minimize}{minimize}
\DeclareMathOperator*{\maximize}{maximize}
\DeclareMathOperator*{\st}{subject\ to}
\def\diag{\text{diag}}
\def\ln{\text{ln}}
\def\bea{\begin{equation}\begin{aligned} }
\def\ena{\end{aligned}\end{equation} }
\def\bee{\begin{equation}}
\def\ene{\end{equation}}
\renewcommand{\vec}[1]{\mathbf{#1}}
\newcommand{\note}[1]{\ignorespaces}
\newtheorem{assum}{Assumption}
\def\T{\mathsf{T}}
\def\bone{{\mathbf{1}}}
\def\bxi{\boldsymbol{\xi}}
\newcommand{\mathletter}[1]{%
	\expandafter\newcommand\csname b#1\endcsname{\mathbb #1}
	\expandafter\newcommand\csname c#1\endcsname{\mathcal #1}
	\expandafter\newcommand\csname f#1\endcsname{\mathfrak #1}
	\expandafter\newcommand\csname til#1\endcsname{\widetilde #1}
	\expandafter\newcommand\csname ha#1\endcsname{\widehat #1}
	\expandafter\newcommand\csname bf#1\endcsname{\bf #1}
	\expandafter\newcommand\csname s#1\endcsname{\mathsf #1}
}%
\def\mathletters#1{\mathlettersB #1,,}
\def\mathlettersB#1,{\ifx,#1,\else\mathletter #1\expandafter\mathlettersB\fi}
\newcommand{\mathletterl}[1]{%
	\expandafter\providecommand\csname v#1\endcsname{\vec{#1}}
}%
\def\mathlettersl#1{\mathlettersC #1,,}
\def\mathlettersC#1,{\ifx,#1,\else\mathletterl #1\expandafter\mathlettersC\fi}
\crefname{assumption}{Assumption}{Assumption}
\title{Decentralized Stochastic Gradient Tracking for Non-convex Empirical Risk Minimization\thanks{Submitted to the editors DATE.
		\funding{This work was supported by the National Natural Science Foundation of China under Grant 61722308.}}}
\author{Jiaqi~Zhang\thanks{Department of Automation, and BNRist, Tsinghua University, Beijing 100084, China. 
		(\email{zjq16@mails.tsinghua.edu.cn}, \email{youky@tsinghua.edu.cn}).}
	\and Keyou~You\footnotemark[2]}
\begin{document}

\maketitle

\begin{abstract}
	This paper studies a decentralized stochastic gradient tracking (DSGT) algorithm  for non-convex empirical risk minimization problems over a peer-to-peer network of nodes, which is in sharp contrast to the existing DSGT only for convex problems. To ensure exact convergence and handle the variance among decentralized datasets, each node performs a stochastic gradient (SG) tracking step by using a mini-batch of samples, where the batch size is designed to be proportional to the size of the local dataset. We explicitly evaluate the convergence rate of DSGT  with respect to the number of iterations  in terms of algebraic connectivity of the network, mini-batch size, gradient variance, etc. Under certain conditions, we further show that DSGT has a network independence property in the sense that the network topology only affects the convergence rate up to a constant factor. Hence, the convergence rate of DSGT can be comparable to the centralized SGD method. Moreover, a linear speedup of DSGT with respect to the number of nodes is achievable for some scenarios. Numerical experiments for neural networks and logistic regression problems on CIFAR-10 finally illustrate the advantages of DSGT.
\end{abstract}

\begin{keywords}
  decentralized training,  empirical risk minimization, stochastic gradient, gradient tracking, algebraic connectivity
\end{keywords}

\begin{AMS}
  90C15, 90C35, 90C06
\end{AMS}

\section{Introduction}

\subsection{Decentralized empirical risk minimization}
Empirical risk minimization (ERM) problems arise from many machine learning applications, which train models by minimizing some risk function related to a set of samples \cite{bottou2018optimization}. In many problems, the dataset is very large and it may be too slow to train on a single computing node. Moreover, the dataset can be collected from spatially distributed nodes and each local dataset is only be accessed by a single node. Both cases suggest the necessity of using multiple nodes for decentralized training.

In this work, we consider the problem where an ERM problem is solved using $n$ computing nodes in a decentralized manner. Each node $i$ collects or is assigned a local dataset $\cD_i=\{d_{1}^{(i)},\ldots,d_{N_i}^{(i)}\}$ with $N_i$ samples, and a local objective function $f_i(\vx)$ is defined associated with $\cD_i$. An ERM problem aims to solve an optimization problem of the form
\bee\label{obj2}
\minimize_{\vx\in\bR^m}\ f(\vx)=\sum_{i=1}^{n}f_i(\vx),\ f_i(\vx)\triangleq\sum_{u=1}^{N_i}l(\vx;d_{u}^{(i)})
\ene
where $l(\vx;d_{u})$ is the loss function of sample $d_{u}$ given parameter $\vx$, and is assumed to be continuously differentiable in this work. 
We allow local datasets to have different sizes and be sampled from different distributions.

Two types of networks  are commonly adopted for communications among nodes. (a) master-slave networks \cite{dean2012large,mcmahan2017communication}, where a  master node in the network collects and aggregates information (e.g. local gradients) from all slave nodes per iteration for updating. 
(b) peer-to-peer networks, where each node or worker performs local updates and communicates only with neighboring nodes to share information \cite{nedic2017network,lian2017can,assran2018stochastic,pu2018distributed2,zhang2020distributed}. The associated algorithms are referred to as {\em centralized} and {\em decentralized} (or distributed) algorithms  in this work, respectively.  Since the centralized one is vulnerable to the damage of master node, has poor privacy protection, and suffers from the communication bottleneck of the master node \cite{lian2017can,assran2018stochastic}, this paper focuses on the design of \emph{decentralized} algorithms.

The interactions between nodes is modeled by a graph $\cG=(\cV,\cE)$, where $\cV=\{1,\ldots,n\}$ is the set of nodes, $\cE$ is the set of edges and $(i,j)\in\cE$ if and only if nodes $i$ and $j$ can communicate with each other. The set $\cN_i=\{j|(i,j)\in\cE\}$ is called the neighbors of node $i$. In decentralized optimizations,  each node only performs local computations and communicates with its neighbors.

\subsection{Our contributions}

We study a decentralized stochastic gradient tracking (DSGT) algorithm to solve \cref{obj2} over a peer-to-peer network. DSGT allows nodes to only have access to local datasets. Each node computes stochastic gradients (SGs) with a mini-batch of samples from its local dataset, and communicates with its neighbors at each iteration for cooperative optimization.

If all local objective functions $f_i,i\in\cV$ are Lipschitz smooth and non-convex, we provide an explicit  non-asymptotic convergence rate of DSGT in terms of key parameters of the problem. Specifically, all nodes in DSGT converge to the same stationary point at a rate of $O\big(\frac{1}{\sum_{t=1}^K\gamma_t}\big(D+\sigma_s^2\sum_{t=1}^K\gamma_t^2+\frac{\rho^2\sigma_s^2}{(1-\rho)^3}\sum_{t=1}^K\gamma_t^3\big)\big)$, where $K$ is the number of iterations, $\gamma_k$ is the stepsize  in the $k$-th iteration, $D$ is related to the initial values, $\sigma_s^2$ is the variance of local SGs, and $(1-\rho)\in(0,1]$ is the algebraic connectivity \cite{chung1997spectral,aragues2014distributed} of the communication graph $\cG$ (formally defined in \cref{assum1}). In particular, the convergence rate becomes $O(\frac{D\sigma_s}{\sqrt{K}}+\frac{\rho^2 D^2}{(1-\rho)^3K})$ if an appropriate constant stepsize is adopted, and achieves a rate of $O(1/k^{1-p})$ (or $O(\ln(k)/\sqrt{k})$) for diminishing stepsizes $\gamma_k=O(1/k^p),p\in(0.5,1)$ (or $p=0.5$). We also show that the function value converges to the optimal value if the objective function is further convex.

Moreover, our theoretical result reveals that the convergence rate of DSGT can be independent of the network topology under reasonable conditions. Specifically, if the stepsize satisfies $\frac{\rho^2}{(1-\rho)^3}\sum_{t=1}^k\gamma_t^3=O(\sum_{t=1}^k\gamma_t^2)$, then the convergence rate of DSGT with respect to (w.r.t) the number of iterations is comparable to the centralized SGD, and hence DSGT behaves as if the network does not exist. For constant stepsizes, this condition transforms into running the algorithm for sufficiently many iterations. For diminishing stepsize it is satisfied if the initial stepsize is small enough.

Speedup is an important property to evaluate the scalability of a decentralized algorithm. We compare DSGT with $n$ nodes to the centralized SGD running on a single node, and then study the speedup of DSGT under different scenarios. We show that the convergence rate of DSGT can match that of the centralized SGD w.r.t. number of iterations under some assumptions, and hence a linear speedup in running time w.r.t. number of nodes is achievable. Nevertheless, we also suggest that there are cases, e.g., local cost functions have very different Lipschitz constants, where the speedup may be sublinear for both DSGT and existing algorithms, which appears to be overlooked in the literature (e.g. \cite{lian2017can,tang2018d,assran2018stochastic}).

Finally, we conduct several experiments on neural networks and logistic regression problems, and show the advantage of DSGT over two recent major decentralized algorithms.

\subsection{Related work}
DSGT was first introduced in \cite{pu2018distributed,pu2018distributed2} for stochastic optimization problems with strongly convex functions. Since they do not study the ERM problem, it does not exploit the inherent feature of  sampling from local datasets, which leads to some interesting results in this work, e.g., we show that DSGT can be even faster than the centralized SGD for some ERM problems (c.f. \cref{exm2}), and achieve the speedup property. Moreover, they show the linear convergence of DSGT to a neighborhood of an optimal solution via the spectral radius of some matrix $M\in\bR^{3\times 3}$. Clearly, such an approach is no longer applicable here as we cannot expect linear convergence for general convex or non-convex functions. In contrast, under weaker assumptions on the variance of SGs, this work deals with both constant and diminishing stepsizes for non-convex functions and derives the non-asymptotic convergence rate with an explicit dependence on key parameters of ERM problems.

A recent work \cite{lu2019gnsd} studies a combine-then-adapt \cite{yuan2019performance,yuan2018variance} variant of DSGT with constant stepsize for non-convex functions. However, their assumptions and convergence rate results are more conservative, and there is no analysis of dependence on the algebraic connectivity or speedup properties. Interestingly, a very recent work \cite{xin2020improved} on stochastic optimization has been posted on the arXiv several days before we finish the revision. It also derives conditions for the network independence property of DSGT for non-convex functions, and obtains the convergence rate of DSGT if the Polyak-{\L}ojasiewicz condition is satisfied.   Moreover, it only characterizes the convergence rate for constant stepsizes, and does not compare with the centralized SGD or analyze the speedup in the context of ERM. Finally, the deterministic version of DSGT and its variants are studied in \cite{nedic2017achieving,qu2017harnessing,xu2015augmented}, where a linear convergence rate can be achieved for strongly convex and Lipschitz smooth objective functions.

A striking advantage of DSGT lies in the independence of its convergence rate on the differences between local objective functions. This feature is important in applications where the distributions between local datasets are large (e.g. federated learning \cite{mcmahan2017communication}). In this sense, DSGT improves most existing decentralized stochastic algorithms \cite{lian2017can,assran2018stochastic,hao2019linear}. For example, the D-PSGD\cite{lian2017can} converges at a rate of $O\left(\frac{\sigma}{\sqrt{nK}}+\frac{n^{{1}/{3}} \zeta^{{2}/{3}}}{K^{{2}/{3}}}\right)$, where $\zeta$ reflects the discrepancy among local cost functions, i.e, $\frac{1}{n}\sum_{i=1}^n\|\nabla f_i(\vx)-\nabla f(x)\|^2\leq\zeta^2,\forall \vx$. This assumption is strong and D-PSGD can even diverge for non-convex functions if it is not satisfied \cite[Example 3]{chang2020distributed}. To remove it,  D$^2$ \cite{tang2018d} has been proposed with fixed stepsizes.  In comparison, DSGT allows a wider class of weight matrices that often lead to a smaller $\rho$ and hence a faster convergence rate than D$^2$. We shall compare DSGT with D$^2$ and D-PSGD in detail later.

There are many other decentralized algorithms that focus on different aspects. (a) Algorithms using \emph{full} local gradients such as DGD\cite{nedic2009distributed}, DDA\cite{agarwal2010distributed}, EXTRA\cite{shi2015extra},  MSDA\cite{scaman2017optimal} and MSPD\cite{scaman2018optimal}. MSDA is shown to have optimal dependence on the algebraic connectivity of the network, but it requires computing the gradient of a Fenchel conjugate function per iteration, which makes it difficult to adapt to SGs and complex cost functions, e.g., neural networks. (b) Algorithms for general directed networks such as SGP \cite{assran2018stochastic,nedic2016stochastic}, \cite{xie2018distributed}, and \cite{xin2019distributed}. In particular, SGP reduces to D-PSGD for symmetric weight matrix in undirected graphs. (c) Algorithms with asynchronous updates such as AD-PSGD\cite{lian2018asynchronous}, Asy-SONATA\cite{tian2018achieving}, AsySPA \cite{zhang2019asyspa,assran2018asynchronous}, APPG \cite{zhang2019asynchronous}, and \cite{hendrikx2019asynchronous}. (d) Algorithms with compressed or efficient communication \cite{lan2017communication,shen2018towards,koloskova2019decentralized,tang2018communication,lu2020moniqua}, unreliable network \cite{tang2018distributed}, stale gradient information \cite{sirb2018decentralized,assran2018stochastic},  stochastic approximation \cite{bianchi2013performance}, information diffusion \cite{yuan2019performance}, distributed SGD with momentum \cite{hao2019linear}, etc. It is interesting to extend DSGT to the above situations in future works.

Finally, it is also possible to accelerate DSGT with momentum \cite{hao2019linear,qu2019accelerated} or variance reduction methods \cite{johnson2013accelerating,roux2012stochastic,yuan2018variance}. For example, \cite{mokhtari2016dsa} combines EXTRA and SAGA \cite{defazio2014saga} to obtain DSA, which converges linearly. However, its performance for non-convex functions and speedup w.r.t. number of workers are unclear.

{\bf Notations}~~~Throughout this paper, $\|\cdot\|$ denotes the $l_2$ norm of vectors or spectral norm of matrices. $\|\cdot\|_\sF$ denotes the Frobenius norm. $\nabla f$ denotes the gradient of $f$ and $\partial f$ denotes a SG (formally defined in Section \ref{sec2}).  $I$ denotes the identity matrix, and $\bone$ denotes the vector with all ones, the dimension of which depends on the context. A matrix $W$ is called doubly stochastic if $W\bone=\bone$ and $W^\T\bone=\bone$. 

\section{The Decentralized Stochastic Gradient Tracking algorithm (DSGT)}\label{sec2}

The DSGT is given in Algorithm \ref{alg_DSGT}. At each iteration $k$, each node $i$ receives $\vx_{j,k}$ and $\vy_{j,k}$ from each neighbor $j$, and uses them to update $\vx_{i,k+1}$ by \cref{eq1_alg}, which is an inexact SG update followed by a weighted average. Then, node $i$ computes the SG based on $\vx_{i,k+1}$ and a mini-batch samples $\xi_{i,k+1}$, and use it to update $\vy_{i,k+1}$ by \cref{eq2_alg}, which is the so-called SG tracking step and produces an inexact SG of the \emph{global} objective function. Finally,  $\vx_{i,k+1}$ and $\vy_{i,k+1}$ are broadcast to neighbors and all nodes enter the next iteration. Thus, the local SG and dataset are not shared with other nodes. $[W]_{ij}$ used in the weighted average in \cref{eq1_alg} and \cref{eq2_alg} is the $(i,j)$-th element of $W$, and $W\in\bR^{n\times n}$ is a doubly stochastic matrix with $[W]_{ij}=0$ if $i$ and $j$ cannot communicate with each other, i.e., $(i,j)\notin\cE$.

From a global viewpoint, Algorithm \ref{alg_DSGT} can be written in the following compact form
\bea\label{alg}
X_{k+1}&=W(X_k-\gamma_k Y_k)\\
Y_{k+1}&=WY_k+\partial_{k+1}-\partial_k
\ena
where $X_k$, $Y_k$ and $\partial_k$ are $n\times m$ matrices defined as
\bea
&X_k=[\vx_{1,k},\vx_{2,k},\ldots,\vx_{n,k}]^\T,\ Y_k=[\vy_{1,k},\vy_{2,k},\ldots,\vy_{n,k}]^\T,\\
&\partial_k=[\partial f_1(\vx_{1,k};\xi_{1,k}),\ldots,\partial f_n(\vx_{n,k};\xi_{n,k})]^\T
\ena
with $\partial f_i(\vx_{i,k};\xi_{i,k})=\sum\nolimits_{d\in\xi_{i,k}}\nabla_x l(x;d)$,
and $\xi_{i,k}$ is a set of $\eta N_i (\eta\in(0,1))$ data points that are uniformly and randomly sampled from the local dataset $\cD_i$ at iteration $k$. The initial states of \cref{alg} is $X_1=WX_0$ and $Y_1 =\partial_1$.

\subsection{Mini-batch size proportional to the size of local dataset} 

A notable feature of DSGT is that the mini-batch size is proportional to local dataset size, i.e., $\eta$ in Algorithm \ref{alg_DSGT} is the same among nodes, while the existing decentralized algorithms use the same batch size for each node regardless of the local dataset  size, see e.g. \cite{lian2017can,tang2018d,assran2018stochastic}. We now provide a motivation for it. 

Let $\cF_k=\{X_0,\bxi_0,Y_0,\ldots,X_{k-1},\bxi_{k-1},Y_{k-1},X_k,\bxi_k, Y_k, X_{k+1}\}$ be the history sequence of random variables and define
\bea\label{eq3_sec2}
%\noeqref{eq3_sec2}
\bar\vx_k&\triangleq\frac{1}{n}\sum_{i=1}^{n}\vx_{i,k}=\frac{1}{n}X_k^\T\bone,\\ 
\bar\vy_k&\triangleq\frac{1}{n}\sum_{i=1}^{n}\vy_{i,k}=\frac{Y_k^\T\bone}{n}\overset{\cref{alg}}{=}\frac{1}{n}\left(Y_{k-1}+\partial_k-\partial_{k-1}\right)^\T\bone=\frac{1}{n}\left(Y_1+\partial_k-\partial_1\right)^\T\bone=\frac{\partial_k^\T\bone}{n},\\
\vg_k&\triangleq\frac{1}{n}\sum_{i=1}^{n}\nabla f_i(\vx_{i,k})=\frac{\nabla F(X_k)^\T\bone}{n}, \text{ where } \nabla F(X_k)\triangleq[\nabla f_1(\vx_{1,k}),\ldots,\nabla f_n(\vx_{n,k})]^\T\\
\ena
and we used $(Y_k-\partial_{k})^\T\bone=(Y_{k-1}-\partial_{k-1})^\T\bone$. Then,  it holds that $
\bE[\bar\vy_k|\cF_{k-1}]=\frac{1}{n}\sum_{i=1}^{n}\bE[\partial f_i(\vx_{i,k};\xi_{i,k})]=\frac{1}{n}\sum_{i=1}^{n}\frac{\eta N_i}{N_i}\sum_{u=1}^{N_i}\nabla_x l(\vx_{i,k};d_u^{(i)})=\eta \vg_{k}.
$
If all $\vx_{i,k}$ almost achieve consensus, i.e., $\vx_{i,k}\approx\bar\vx_{k},\forall i$,  it follows from \cref{alg} that the update of $\bE[\bar\vx_{k}]$ is just a standard gradient descent step, i.e.,
\bee
\bE[\bar\vx_{k+1}|\cF_{k-1}]=\bar\vx_{k}-\gamma_k\eta \vg_k\approx\bar\vx_{k}-\frac{\gamma_k\eta}{n}{\sum\nolimits_{i=1}^{n}\nabla f_i(\bar{\vx}_k)} =\bar\vx_{k}-\frac{\gamma_k\eta}{n}\nabla f(\bar\vx_{k})
\ene
which promises the convergence of $\bar\vx_{k}$ to an optimal solution of \cref{obj2} if appropriate stepsizes are designed. In fact, the consensus is guaranteed due to the weighted averaging in \cref{eq1_alg} and \cref{eq2_alg}.

However, if all nodes use the same mini-batch size, say $M$, then
\bee
\bE[\bar\vy_k|\cF_{k-1}]=\frac{1}{n}\sum\nolimits_{i=1}^{n}\frac{M}{N_i}\sum\nolimits_{u=1}^{N_i}\nabla_x l(\vx_{i,k};d_u^{(i)})\approx\frac{M}{n}\sum\nolimits_{i=1}^{n}\frac{1}{N_i}\nabla f_i(\bar\vx_{k}).
\ene
The resulting algorithm is actually minimizing a weighted sum of local objective functions with weights inversely proportional to the size of local dataset, which is generally different  from that of \cref{obj2}. This problem also exists in the existing works \cite{lian2017can,tang2018d,assran2018stochastic}.

\begin{algorithm}[!t]
	\caption{The Decentralized Stochastic Gradient Tracking Method (DSGT) --- from the view of node $i$}\label{alg_DSGT}
	\begin{minipage}{\textwidth}
		\begin{algorithmic}[1]
			\REQUIRE Initial states $\vx_{i,0}, \vy_{i,0}=0, \vs_{i,0}=0$, stepsizes $\{\gamma_k\}$, weight matrix $W$, batch size proportion $\eta\in(0,1]$, and the maximum number of iterations $K$.
			\FOR {$k=0,1,\ldots,K-1$}
			\STATE Receive or fetch $\vx_{j,k}$ and ${\vy}_{j,k}$ from neighbors, and update $\vx_{i,k+1}$ by the weighted average:
			\bee\label{eq1_alg}
			\vx_{i,k+1}=\sum\nolimits_{j\in\cN_i}[W]_{ij}({\vx}_{j,k}-\gamma_k\vy_{j,k}).
			\ene
			\STATE Uniformly randomly sample a mini-batch $\xi_{i,k+1}$ of size $\eta N_i$\footnote{One can also perform Bernoulli sampling on $\cD_i$ with the probability of success $\eta$, which essentially leads to the
			same convergence result.} from the local dataset $\cD_i$, compute the stochastic gradient and set $\vs_{i,k+1}=\partial f_i(\vx_{i,k+1};\xi_{i,k+1})=\sum_{d\in\xi_{i,k+1}}\nabla_x l(\vx_{i,k+1};d)$.
			\STATE Update $\vy_{i,k+1}$ and  ${\vx}_{i,k+1}$ by
			\bea\label{eq2_alg}
			\vy_{i,k+1}&=\sum\nolimits_{j\in\cN_i}[W]_{ij}{\vy}_{j,k}+\vs_{i,k+1}-\vs_{i,k}
			\ena
			\STATE Send ${\vx}_{i,k+1}$ and ${\vy}_{i,k+1}$ to all neighbors.
			\ENDFOR
%			\ENSURE $\frac{1}{n}\sum_{i=1}^n\vx_{i,K}$.
		\end{algorithmic}
	\end{minipage}
\end{algorithm}

\subsection{Comparison with D$^2$}\label{sec2.2}

DSGT shares some similarities with D$^2$\cite{tang2018d} except the batch size and time-varying stepsizes. In a compact form, D$^2$ has the following update rule
\bee\label{eq1_sec2}
X_{k+1}=2WX_k-WX_{k-1}-\gamma W(\partial F(X_{k};\bxi_{k})-\partial F(X_{k-1};\bxi_{k-1}))
\ene
and DSGT (with constant stepsize) can be rewritten as the following form by eliminating $Y_{k}$
\bee\label{eq4_sec2}
X_{k+1}=2WX_k-W^2X_{k-1}-\gamma W(\partial F(X_{k};\bxi_{k})-\partial F(X_{k-1};\bxi_{k-1})).
\ene
It turns out that the only difference is the coefficient matrix before $X_{k-1}$, which however requires a completely different analysis and provides the following advantages:
\begin{itemize}[leftmargin=14pt]
	\item The convergence of D$^2$ requires $W+\frac{1}{3}I$ to be positive definite while DSGT does not. This implies that DSGT can adapt to a wider class of communication graphs. In fact, the optimal weight matrix $W$ with the smallest $\rho$ (see \cref{assum1})  often violates that $W\succ -\frac{1}{3}I$ \cite{xiao2004fast} and leads to the divergence of D$^2$, which is illustrated in the following example.

\begin{figure}[!ht]
	\centering
	\includegraphics[width=0.3\linewidth]{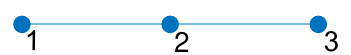}
	\caption{A graph.}
	\label{fig_graph}
\end{figure}
\begin{example}
Consider the communication topology in Fig. \ref{fig_graph}. From \cite{xiao2004fast}, the optimal weight matrix minimizing $\rho$ is given by
$$
W=\begin{bmatrix}
	0.5 & 0.5 & 0   \\
	0.5 & 0   & 0.5 \\
	0   & 0.5 & 0.5
\end{bmatrix},
$$
which has a negative eigenvalue -0.5. Thus, it does not satisfy $W\succ -\frac{1}{3}I$. 

We now show that D$^2$ (or EXTRA in the deterministic case) indeed diverges for some problems with such a weight matrix $W$.  Consider a simple least square problem where $f_i(x)=\frac{1}{2}(x-a_i)^2$ and the global cost function is $f(x)=\sum_{i=1}^{3}f_i(x)$. It follows from \cref{eq1_sec2} that D$^2$ has the update rule
\bee\label{eq1_adx7}
X_{k+1} = (2-\gamma)WX_k - (1-\gamma)WX_{k-1}
\ene
where we use the full gradient $\nabla f_i(x_i)=x_i-a_i$ instead of a SG. To study the behavior of \cref{eq1_adx7}, we rewrite it as
\bee
\begin{bmatrix}
	X_{k+1}-\bone(x^\star)^\T \\
	X_{k}-\bone(x^\star)^\T
\end{bmatrix}=
\underbrace{\begin{bmatrix}
		(2-\gamma)W & -(1-\gamma)W \\
		I           & 0
\end{bmatrix}}_{H}
\begin{bmatrix}
	X_{k}-\bone(x^\star)^\T\\
	X_{k-1}-\bone(x^\star)^\T
\end{bmatrix}
\ene
where $x^\star$ is the minimum point of $f(x)$. Thus, $X_k$ converges to $\bone(x^\star)^\T$ only if the spectral radius of $H$ is less than 1, which however does not hold for any $\gamma < 0.5$. Hence D$^2$ \emph{diverges} for \emph{any} $\gamma < 0.5$. Note that DSGT converges for any $\gamma \leq 1$.
\end{example}
	\item $\vy_{i,k}$ in DSGT is to track the aggregated local SGs of nodes and the expectation of $\bar\vy_{k}$ is an approximation of the full gradient. This fact offers an insight into the algorithm and brings convenience to extend DSGT. For example, one may consider treating $\vy_{i,k}$ as true global SGs and applying existing accelerated methods such as momentum to $\vy_{i,k}$. 
\end{itemize}

\subsection{Comparison with D-PSGD}\label{sec2.3}

DSGT also improves D-PSGD \cite{lian2017can}  (or SGP \cite{assran2018stochastic} over static undirected graphs) by removing the assumption $\frac{1}{n}\sum_{i=1}^n\|\nabla f_i(\vx)-\nabla f(x)\|^2\leq\zeta^2,\forall \vx$, without which D-PSGD may diverge \cite[Example 3]{chang2020distributed}. With some mathematical manipulations,
we can rewrite \cref{alg} with constant stepsize as
\bee\label{eq2_sec2}
X_{k+1}=\underbrace{WX_{k}-\gamma\partial F(X_{k};\bxi_{k})}_{\text{D-PSGD}}-\underbrace{\gamma\big(\textstyle\sum_{t=0}^{k-1}W(W-I)^t\partial F(X_{k-t};\bxi_{k-t})-\partial F(X_{k};\bxi_{k})\big)}_{\text{momentum}}
\ene
In this view, DSGT can be regarded as D-PSGD with a carefully designed momentum. Note that the momentum in \cref{eq2_sec2} uses history information from neighbors, which is quite different from naive acceleration schemes by directly applying Nesterov's momentum to \emph{local} SGs.

We finally note that each node in DSGT transmits two $m$-dimensional variables per iteration, which can be a drawback in high-latency and low-bandwidth networks. Nevertheless,  empirical results in Section \ref{sec4} show that DSGT needs less training time to achieve the same degree of accuracy. Future works can focus on the design of communication-efficient versions of DSGT \cite{lan2017communication,shen2018towards,koloskova2019decentralized,lu2020moniqua}. 

\section{Theoretical result}\label{sec3}

This section establishes the non-asymptotic convergence rate of DSGT. To this end, we make the following assumption.
\begin{assum}\label{assum1}
	The following assumptions are made throughout this paper.
	\begin{enumerate}[leftmargin=15pt, labelsep=4pt,label=(\alph*)]
		\item All local objective functions are Lipschitz smooth, i.e., there exist $L_i>0,i\in\cV$ such that for any $\vx,\vy\in\bR^m$,
		$
		\|\nabla f_i(\vx)-\nabla f_i(\vy)\|\leq L_i\|\vx-\vy\|.
		$
		Moreover, the global objective function $f$ is bounded from below.

		\item The communication graph is connected\footnote{A graph is connected means that there exists a path between any two nodes in the graph. This is obviously a necessary condition.} and the weight matrix $W$ is doubly stochastic. 
		\item The variances of local SGs satisfy that
		\bee
		\bE[\|\partial f_i(\vx;\xi_{i,k})-\eta\nabla f_i(\vx)\|^2|\cF_{k-1}]\leq\sigma_i^2+\lambda^2\|\eta\nabla f(\vx)\|^2,\ \forall i,\vx,k.
		\ene
		for some $\sigma_i,\lambda\geq0,i\in\cV$.
	\end{enumerate}
\end{assum}

Define $\rho\triangleq\|W-\frac{1}{n}\bone\bone^\T\|$ which is strictly less than $1$ under Assumption \ref{assum1}(b), $\sigma_s^2\triangleq\sum_{i=1}^n\bE[\|\partial f_i(\vx;\xi_{i,k})-\eta\nabla f_i(\vx)|\cF_{k-1}\|^2]\leq\sum_{i=1}^{n}\sigma_i^2$, $f^\star\triangleq\inf_{\vx}f(\vx)$, and $L \triangleq\max_{i\in\cV} L_i$. We explore \cref{assum1} in more details.
\begin{itemize}[leftmargin=14pt]
	\item \cref{assum1}(a) is standard in non-convex optimization \cite{lan2020first}. In the context of ERM, it is worth noting that $L_i$ generally depends on the local dataset size $N_i$ since $f_i$ is the summation of $N_i$ functions. 
	\item \cref{assum1}(b) arises from the decentralized optimization \cite{lian2017can,assran2018stochastic} and $1-\rho$ is the well-studied \emph{algebraic connectivity} of the network \cite{chung1997spectral,aragues2014distributed}\footnote{The original definition of algebraic connectivity is the second smallest eigenvalue of the Laplacian matrix associated with the graph. Here we slightly abuse the notation and $1-\rho$ is actually the second smallest eigenvalue of the \emph{normalized} Laplacian matrix for a symmetric $W$.}, which reflects the speed of information mixing over the network. The weight matrix $W$ satisfying \cref{assum1}(b) can be constructed by the Metropolis method \cite{shi2015extra,xiao2004fast}, and \cite{xiao2004fast} discusses how to find an optimal weight matrix with the largest algebraic connectivity. 
	\item \cref{assum1}(c) restricts the variance of local SGs w.r.t. local datasets. Although we do not explicitly assume the unbiasedness of the local SGs, the sampling scheme naturally implies that $\bE[\partial f_i(\vx;\xi_{i,k})]=\eta\nabla f_i(\vx)$. In fact, the theoretical result in this paper holds for any SG satisfying $\bE[\partial f_i(\vx;\xi_{i,k})]=\eta\nabla f_i(\vx)$. Note that many works on SG methods invoke a stronger version of  \cref{assum1}(c) by further assuming $\lambda=0$ to ensure the variance be upper bounded \cite{pu2018distributed2,lian2017can,lan2020first}, while we allow the variance to grow quadratically. In fact, it is safe to set $\lambda=0$ in ERM problems where the cost function $l(\vx;d)$ has bounded gradients for all $\vx,d$, such as the problem of training logistic regression classifiers or training neural networks with cross entropy loss and sigmoid activation function \cite{lan2020first}.
\end{itemize}

The convergence rate of DSGT is quantified by the following term:
\bee\label{eq_r}
R(k)=\frac{\sum_{t=1}^{k}\gamma_t\big(\bE[\|\nabla f(\bar\vx_{t})\|^2]+nL^2\bE[\|X_t-\bone\bar\vx_{t}^\sT\|_\sF^2]\big)}{\sum_{t=1}^k\gamma_t}
\ene
where essentially $\bE[\|\nabla f(\bar\vx_{t})\|^2$ measures the optimality gap to a stationary point and $\bE[\|X_t-\bone\bar\vx_{t}^\sT\|_\sF^2]$ measures the consensus error among nodes.  Note that $R(k)$ is commonly adopted  to  characterize the convergence rate in the centralized non-convex optimization where $\bE[\|X_t-\bone\bar\vx_{t}^\sT\|_\sF^2]$ is obviously equal to $0$ \cite{lan2020first}. Moreover, it holds that $\frac{1}{2n}\sum_{i=1}^n\bE[\|\nabla f(\vx_{i,t})\|^2]\leq\bE[\|\nabla f(\bar\vx_{t})\|^2+nL^2\|X_t-\bone\bar\vx_{t}^\sT\|_\sF^2]$, which implies that $\min_{t\leq k}\frac{1}{2n}\sum_{i=1}^n$$\bE[\|\nabla f(\vx_{i,t})\|^2]\leq R(k)$.

The following theorem is the main theoretical result of this paper, the proof of which is provided in the next section. 
\begin{theorem}\label{theo1}
	Under \cref{assum1}, if the non-increasing stepsizes $\{\gamma_k\}$ in Algorithm \ref{alg_DSGT}  satisfies that $0<\gamma_k\leq\gamma\triangleq\frac{(1-\rho)^2}{\eta L(1+\rho)^2\max\{\sqrt{1+n^2\lambda^2},24n^2\lambda^2\}},\forall k\geq 1$, it holds that
	\bea
	% &R(K)\\
	% &\leq\frac{12n(f(\bar\vx_{1})-f^\star)}{\eta\sum_{t=1}^K\gamma_t}+\frac{8L\sigma_s^2\sum_{t=1}^K\gamma_t^2}{\eta\sum_{t=1}^K\gamma_t}+\frac{2880n\rho^2L\tilde L\sigma_s^2\sum_{t=1}^K\gamma_t^3}{(1-\rho)^3\sum_{t=1}^K\gamma_t}+\frac{48n\rho\gamma L\tilde LC}{(1-\sqrt{\rho})^2\sum_{t=1}^K\gamma_t}
	R(K)\leq\frac{1}{\sum_{t=1}^K\gamma_t}\Big(\frac{9L\sigma_s^2}{\eta}\sum_{t=1}^K\gamma_t^2+\frac{96\rho^2(1+\sqrt{\rho})^2nL\tilde L\sigma_s^2}{(1-\rho)^3}\sum_{t=1}^K\gamma_t^3+12nL\gamma_1C+\frac{12D^2}{\eta L}\Big)
	\ena
where $\tilde L=L\sqrt{1+n^2\lambda^2}$, $C=\frac{2\sqrt{\rho}\tilde L}{1-\sqrt{\rho}}\|X_1-\bone\bar\vx_1^\T\|_\sF^2+\frac{2\tilde L\rho^2(1+\rho)\gamma_1^2}{(1-\sqrt{\rho})^3}\bE[\|Y_1-\bone\bar\vy_1^\T\|_\sF^2]$ and $D=\sqrt{nL(f(\vx_1)-f^\star)}$.
\end{theorem}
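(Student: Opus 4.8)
The plan is to run the standard descent–plus–consensus-error argument, but tracking three coupled quantities simultaneously: the objective value $f(\bar\vx_k)$, the consensus error $\bE[\|X_k-\bone\bar\vx_k^\T\|_\sF^2]$, and the gradient-tracking error $\bE[\|Y_k-\bone\bar\vy_k^\T\|_\sF^2]$. First I would use \cref{assum1}(a) together with the averaged dynamics $\bE[\bar\vx_{k+1}|\cF_{k-1}]=\bar\vx_k-\gamma_k\eta\vg_k$ derived in \cref{eq3_sec2} to obtain a descent inequality of the form $\bE[f(\bar\vx_{k+1})]\le \bE[f(\bar\vx_k)]-c_1\gamma_k\eta\,\bE[\|\nabla f(\bar\vx_k)\|^2]+c_2\gamma_k\eta L^2\,\bE[\|X_k-\bone\bar\vx_k^\T\|_\sF^2]+c_3 L\gamma_k^2\sigma_s^2$, where the middle term comes from replacing $\vg_k$ by $\tfrac1n\nabla f(\bar\vx_k)$ via Lipschitz smoothness, and the last from the SG variance in \cref{assum1}(c) (the $\lambda^2$ part being absorbed into the stepsize bound). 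Summing this telescopes the $f$-terms into $D^2/(\eta L)$ and isolates $\sum_t\gamma_t\bE[\|\nabla f(\bar\vx_t)\|^2]$ against a weighted sum of consensus errors.

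Next I would bound the consensus error. Projecting \cref{alg} onto the disagreement subspace (multiplying by $I-\tfrac1n\bone\bone^\T$) and using $\|W-\tfrac1n\bone\bone^\T\|=\rho<1$ gives a recursion $\bE[\|X_{k+1}-\bone\bar\vx_{k+1}^\T\|_\sF^2]\le \tfrac{1+\rho^2}{2}\bE[\|X_k-\bone\bar\vx_k^\T\|_\sF^2]+\tfrac{c}{1-\rho^2}\gamma_k^2\bE[\|Y_k-\bone\bar\vy_k^\T\|_\sF^2]$ (Young's inequality splits the cross term), and similarly for $Y_k$ one gets $\bE[\|Y_{k+1}-\bone\bar\vy_{k+1}^\T\|_\sF^2]\le \tfrac{1+\rho}{2}\bE[\|Y_k-\bone\bar\vy_k^\T\|_\sF^2]+c'\bE[\|\partial_{k+1}-\partial_k\|_\sF^2]$, and the increment $\partial_{k+1}-\partial_k$ is controlled by $\tilde L^2(\|X_{k+1}-X_k\|_\sF^2)+\sigma_s^2$-type terms, which in turn feed back into $\gamma_k^2(\|Y_k\|_\sF^2)$ and $\gamma_k^2\|\nabla f(\bar\vx_k)\|^2$. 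The geometric decay factors $\sqrt\rho$ appearing in $C$ suggest the authors actually iterate the $X$- and $Y$-recursions together into a single contraction with rate $\sqrt\rho$, which is why $C$ carries the factors $1/(1-\sqrt\rho)$ and $1/(1-\sqrt\rho)^3$ and absorbs the initial errors $\|X_1-\bone\bar\vx_1^\T\|_\sF^2$ and $\|Y_1-\bone\bar\vy_1^\T\|_\sF^2$.

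Then I would combine: multiply the consensus-error bound by $\gamma_k$, sum over $k$, and use non-increasing stepsizes plus a summation-by-parts / Abel rearrangement so that $\sum_t\gamma_t\|X_t-\bone\bar\vx_t^\T\|_\sF^2$ is bounded by $\tfrac{\rho^2}{(1-\rho)^3}\sum_t\gamma_t^3(\text{variance}+\text{gradient})$ plus the initial-condition term $\gamma_1 C$. Substituting this back into the summed descent inequality, the $\sum_t\gamma_t\|\nabla f(\bar\vx_t)\|^2$ terms on both sides can be matched provided the coefficient in front of the consensus-induced gradient feedback is $<\tfrac12$, which is exactly what the stepsize ceiling $\gamma\le\frac{(1-\rho)^2}{\eta L(1+\rho)^2\max\{\sqrt{1+n^2\lambda^2},\,24n^2\lambda^2\}}$ guarantees; after that absorption one divides through by $\sum_t\gamma_t$ and reads off $R(K)$ (recalling $R(k)$ bundles $\bE[\|\nabla f(\bar\vx_t)\|^2]+nL^2\bE[\|X_t-\bone\bar\vx_t^\T\|_\sF^2]$, so the consensus bound is also needed for the second half of $R$).

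The main obstacle is the tight, self-consistent bookkeeping of the three coupled recursions: $Y_k$ depends on gradient increments, which depend on $X_{k+1}-X_k=\gamma_k WY_k-(I-W)X_k$, which re-injects $\|Y_k\|_\sF^2$ and $\|X_k-\bone\bar\vx_k^\T\|_\sF^2$ back into the very quantity we are trying to bound. Closing this loop requires choosing the Young's-inequality split constants and the stepsize threshold so that the induced linear system is a genuine contraction (spectral radius governed by $\sqrt\rho$ rather than $\rho$, hence the $1-\sqrt\rho$ denominators), and so that all the "bad" gradient-norm feedback can be moved to the left-hand side with a strictly positive remaining coefficient — getting the explicit constants ($9$, $96$, $12$, $24$) to work out is precisely where the delicate part of the proof lies.
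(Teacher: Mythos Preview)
Your plan is sound and shares the paper's ingredients, but the paper organizes them differently. You propose to reduce the Lipschitz remainder $\tfrac{nL\gamma_k^2}{2}\bE[\|\bar\vy_k\|^2]$ in the descent step directly to ``$c_3L\gamma_k^2\sigma_s^2$'' plus gradient/consensus feedback, and then close one loop. The paper instead keeps $\tau_2\triangleq\sum_t\gamma_t^2\bE[\|\bar\vy_t\|^2]$ intact as a \emph{second} summed unknown alongside $\tau_1\triangleq\sum_t\gamma_t\bE[\|X_t-\bone\bar\vx_t^\T\|_\sF^2]$, and derives a separate descent-type inequality (\cref{lemma4}) bounding $\tau_2$ back in terms of $\tau_1$, a gradient sum, and $f(\bar\vx_1)-f^\star$. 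The consensus bound (\cref{lemma3}) comes, as you guessed, from the coupled $X$/$Y$ contraction, but via an explicit eigendecomposition of the $2\times2$ transition matrix $P_k$ (\cref{lemma2}) rather than an ad-hoc Young split; this produces the rate $\theta\le\tfrac{\sqrt\rho+\rho}{1+\rho}$ and the $(1-\sqrt\rho)^{-1}$, $(1-\sqrt\rho)^{-3}$ factors in $C$. These two bounds are then viewed as linear constraints $\tau_1\le a_1\tau_2+b_1$, $\tau_2\le a_2\tau_1+b_2$, and the quantity $\psi=c_1\tau_1+c_2\tau_2$ appearing on the right of the summed descent inequality \cref{eq14_s5} is bounded by the optimal value of this $2\times2$ LP, which is solved in closed form \cref{eq_pstar1}; the stepsize ceiling is exactly what forces $a_1a_2<\tfrac14$ (so the LP is bounded) and the gradient-feedback coefficient below $\tfrac14$ (so it moves to the left). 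Your direct-substitution route would ultimately close as well, but the LP viewpoint with $\|\bar\vy_k\|^2$ as a first-class variable is the paper's main technical device and is what makes the explicit constants tractable.
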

\cref{theo1} explicitly shows the dependence of the convergence rate of DSGT on key parameters of the problem. In particular, we consider constant stepsizes and diminishing stepsizes in the following two corollaries. More discussions including the network independence property are given in Section \ref{sec6}.

\begin{corollary}[Constant stepsize]\label{coro2}
	Under \cref{assum1}, let the stepsize $\gamma_k=\gamma_0\in(0,\gamma], \forall k\ge 0$ in Algorithm \ref{alg_DSGT}, where $\gamma$ is defined in \cref{theo1}. We have  
	\bea
	\frac{1}{2nK}\sum_{t=1}^K\sum_{i=1}^n\bE[\|\nabla f(\vx_{i,t})&\|^2]\leq\frac{1}{K}\sum_{t=1}^{K}\Big(\bE[\|\nabla f(\bar\vx_{t})\|^2]+nL^2\bE[\|X_t-\bone\bar\vx_{t}^\sT\|_\sF^2]\Big)\\
	&\leq\frac{12D^2}{K\eta L\gamma_0}+\frac{9L\sigma_s^2\gamma_0}{\eta}+\frac{96\rho^2(1+\sqrt{\rho})^2nL\tilde L\sigma_s^2\gamma_0^2}{(1-\rho)^3}+\frac{12nLC}{K}.
	\ena
	Furthermore, if 
	$
	K\geq\Big(\frac{\eta D(1+\sqrt{\rho})^2}{\sigma_s(1-\rho)^2}\max\big\{\frac{(11\rho^2n+1)\sqrt{1+n^2\lambda^2}}{1-\rho},24n^2\lambda^2\big\}\Big)^2
	$
	and set
	$
	\gamma_k=\frac{D}{\sqrt{K}L\sigma_s}, \forall k\ge 0,
	$  it holds that
	\bea
	\frac{1}{K}\sum_{t=1}^{K}\Big(\bE[\|\nabla f(\bar\vx_{t})\|^2]+nL^2\bE[\|X_t-\bone\bar\vx_{t}^\sT\|_\sF^2]\Big)\leq\frac{30D\sigma_s}{\eta\sqrt{K}}+\frac{12nLC}{K}.
	\ena
\end{corollary}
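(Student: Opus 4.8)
The plan is to obtain \cref{coro2} directly from \cref{theo1} by specializing to $\gamma_k\equiv\gamma_0$ and then choosing $\gamma_0$ to balance the resulting terms; no analysis beyond the bound already proved in \cref{theo1} is needed, so the whole argument is a computation together with one admissibility check.

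First I would establish the opening inequality chain. The remark immediately preceding \cref{theo1} gives, for every $t$, the pointwise estimate $\tfrac{1}{2n}\sum_{i=1}^n\bE[\|\nabla f(\vx_{i,t})\|^2]\le\bE[\|\nabla f(\bar\vx_t)\|^2+nL^2\|X_t-\bone\bar\vx_t^\sT\|_\sF^2]$; summing over $t=1,\dots,K$ and dividing by $K$ yields the first inequality of the corollary. For the middle-to-right bound, note that with $\gamma_k\equiv\gamma_0$ one has $\sum_{t=1}^K\gamma_t=K\gamma_0$, so the middle quantity is exactly $R(K)$, while $\sum_{t=1}^K\gamma_t^2=K\gamma_0^2$, $\sum_{t=1}^K\gamma_t^3=K\gamma_0^3$, and $\gamma_1=\gamma_0$. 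Substituting these into the bound of \cref{theo1} and dividing through by $K\gamma_0$ produces precisely the four terms $\frac{12D^2}{K\eta L\gamma_0}$, $\frac{9L\sigma_s^2\gamma_0}{\eta}$, $\frac{96\rho^2(1+\sqrt\rho)^2nL\tilde L\sigma_s^2\gamma_0^2}{(1-\rho)^3}$, and $\frac{12nLC}{K}$, which is the claimed inequality.

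For the second part I would first verify that $\gamma_0=\frac{D}{\sqrt K L\sigma_s}$ is admissible, i.e. $\gamma_0\le\gamma$. Rearranging, this is $\sqrt K\ge\frac{\eta D(1+\rho)^2}{\sigma_s(1-\rho)^2}\max\{\sqrt{1+n^2\lambda^2},\,24n^2\lambda^2\}$, which is weaker than the stated hypothesis on $K$ because $(1+\sqrt\rho)^2\ge(1+\rho)^2$ and $\frac{(11\rho^2n+1)\sqrt{1+n^2\lambda^2}}{1-\rho}\ge\sqrt{1+n^2\lambda^2}$. Plugging $\gamma_0=\frac{D}{\sqrt K L\sigma_s}$ into the four terms turns the first into $\frac{12D\sigma_s}{\eta\sqrt K}$, the second into $\frac{9D\sigma_s}{\eta\sqrt K}$, and (using $\tilde L=L\sqrt{1+n^2\lambda^2}$) the third into $\frac{96\rho^2(1+\sqrt\rho)^2 n\sqrt{1+n^2\lambda^2}\,D^2}{(1-\rho)^3 K}$, leaving the last unchanged. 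The remaining step is to absorb this third term into the leading $1/\sqrt K$ term: the inequality $\frac{96\rho^2(1+\sqrt\rho)^2 n\sqrt{1+n^2\lambda^2}\,D^2}{(1-\rho)^3 K}\le\frac{9D\sigma_s}{\eta\sqrt K}$ is equivalent to $\sqrt K\ge\frac{32}{3}\cdot\frac{\eta\rho^2 n(1+\sqrt\rho)^2\sqrt{1+n^2\lambda^2}\,D}{(1-\rho)^3\sigma_s}$, and since $\frac{32}{3}\rho^2 n\le 11\rho^2 n+1$ for $\rho\le1$ this again follows from the hypothesis on $K$. Collecting $12+9+9=30$ gives the bound $\frac{30D\sigma_s}{\eta\sqrt K}+\frac{12nLC}{K}$.

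In short, the corollary is pure bookkeeping on top of \cref{theo1}; the only place requiring care is recognizing that the single lower bound on $K$ in the statement must do double duty — it has to be large enough both to force $\gamma_0\le\gamma$ and to dominate the cubic-in-$\gamma_0$ term by the $1/\sqrt K$ term with the clean constant $30$ — which is exactly what dictates the particular grouping of $(1+\sqrt\rho)^2$, $(11\rho^2n+1)$ and $(1-\rho)^3$ appearing there. I would write out the admissibility inequality and the absorption inequality explicitly and leave the remaining constant arithmetic implicit.
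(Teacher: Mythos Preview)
Your proposal is correct and follows exactly the approach the paper indicates: the paper merely states that the corollary follows directly from \cref{theo1} together with $\frac{1}{2nk}\sum_{t=1}^k\sum_{i=1}^n\bE[\|\nabla f(\vx_{i,t})\|^2]\le R(k)$ and omits all details, so your write-up is in fact a faithful expansion of what the authors had in mind. Your two checks---that the stated lower bound on $K$ simultaneously enforces $\gamma_0\le\gamma$ and allows the $\gamma_0^2$ term to be absorbed into the $1/\sqrt K$ budget with total constant $12+9+9=30$---are exactly the bookkeeping the paper leaves implicit.
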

The proof follows directly from \cref{theo1} and  $\frac{1}{2nk}\sum_{t=1}^k\sum_{i=1}^n\bE[\|\nabla f(\vx_{i,t})\|^2]\leq R(k)$, and hence we omit it here. \cref{coro2} shows that DSGT achieves a rate of order $O(\frac{1}{\sqrt{K}}+\frac{\rho^2}{(1-\rho)^3K})$ for a constant stepsize $\gamma_0=O({1}/{\sqrt{K}})$. As a special case, it also recovers the $O(1/K)$ convergence rate of full gradient descent method by setting $\sigma_s=\lambda=0$. The next corollary characterizes the convergence rate for diminishing stepsizes $\gamma_k=O(1/k^p),\forall p\in[0.5,1)$.

\begin{corollary}[Diminishing stepsize]\label{coro1}
	For any $p\in[0.5,1]$ and $a\in(0,\gamma)$, let $\gamma_k=a/k^p$ in Algorithm \ref{alg_DSGT}. Under \cref{assum1},  it holds that
	\bea\label{eq1_coro1}
	R(k)\leq\frac{1-p}{a(k^{1-p}-1)}\Big(\frac{12D^2}{\eta L}+\frac{9L\sigma_s^2c_2}{\eta}+\frac{96\rho^2(1+\sqrt{\rho})^2nL\tilde L\sigma_s^2c_3}{(1-\rho)^3}+12nLaC\Big)
	\ena
	where $c_2=\frac{2a^2p}{2p-1}$ for $p\in(0.5,1)$, and $c_2= a^2(\ln(k)+1)$ for $p=0.5$; $c_3=\frac{3a^3p}{3p-1}$.
\end{corollary}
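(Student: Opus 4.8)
\emph{Proof proposal.} The plan is to specialize \cref{theo1} to the stepsize sequence $\gamma_k=a/k^p$ and then control the three power sums that appear in the bound by comparison with integrals. First I would verify that the hypotheses of \cref{theo1} are in force: since $a\in(0,\gamma)$ and $p\ge 0$, the sequence $\gamma_k=a/k^p$ is non-increasing and $\gamma_k\le\gamma_1=a<\gamma$ for all $k\ge1$, so \cref{theo1} applies with $K$ replaced by $k$. Together with $\gamma_1=a$ this yields
\bea
R(k)\le\frac{1}{\sum_{t=1}^k\gamma_t}\Big(\frac{9L\sigma_s^2}{\eta}\sum_{t=1}^k\gamma_t^2+\frac{96\rho^2(1+\sqrt{\rho})^2nL\tilde L\sigma_s^2}{(1-\rho)^3}\sum_{t=1}^k\gamma_t^3+12nLaC+\frac{12D^2}{\eta L}\Big).
\ena

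Next I would estimate the sums. For the denominator, since $x\mapsto x^{-p}$ is decreasing and $p<1$, the integral bound $\sum_{t=1}^k t^{-p}\ge\int_1^{k+1}x^{-p}\,dx\ge\frac{k^{1-p}-1}{1-p}$ gives $\sum_{t=1}^k\gamma_t\ge\frac{a(k^{1-p}-1)}{1-p}$, whose reciprocal is exactly the prefactor claimed in \cref{eq1_coro1}. For the numerator sums, note $3p\ge 3/2>1$, so for any exponent $q>1$ one has $\sum_{t=1}^k t^{-q}=1+\sum_{t=2}^k t^{-q}\le 1+\int_1^{k}x^{-q}\,dx\le 1+\frac{1}{q-1}=\frac{q}{q-1}$; taking $q=3p$ gives $\sum_{t=1}^k\gamma_t^3\le\frac{3a^3p}{3p-1}=c_3$, and taking $q=2p$ (legitimate when $p\in(0.5,1)$, since then $2p>1$) gives $\sum_{t=1}^k\gamma_t^2\le\frac{2a^2p}{2p-1}=c_2$. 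In the borderline case $p=0.5$ the exponent $2p=1$ makes the latter estimate diverge, so there I would instead use $\sum_{t=1}^k t^{-1}\le 1+\ln k$ to obtain $\sum_{t=1}^k\gamma_t^2\le a^2(\ln(k)+1)=c_2$, which is precisely why $c_2$ is defined piecewise in the statement.

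Finally, substituting the lower bound on $\sum_{t=1}^k\gamma_t$ into the reciprocal and the upper bounds on $\sum_{t=1}^k\gamma_t^2$ and $\sum_{t=1}^k\gamma_t^3$ into the parenthesized expression (using that all coefficients are nonnegative and that $x\mapsto 1/x$ is decreasing) delivers \cref{eq1_coro1}. The whole argument is essentially mechanical once \cref{theo1} is available; the only point that requires any attention is the boundary case $p=0.5$, where the $\sum\gamma_t^2$ bound must switch from the geometric-type $\frac{2p}{2p-1}$ form to the logarithmic one. I would not expect any genuine obstacle beyond bookkeeping.
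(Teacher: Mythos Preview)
Your proposal is correct and follows essentially the same route as the paper: apply \cref{theo1} with $\gamma_k=a/k^p$, then bound $\sum_{t=1}^k\gamma_t$ below and $\sum_{t=1}^k\gamma_t^2,\sum_{t=1}^k\gamma_t^3$ above by the standard integral comparisons $\int_1^k t^{-p}\,dt\le\sum_{t=1}^k t^{-p}\le 1+\int_1^k t^{-p}\,dt$. Your treatment of the borderline case $p=0.5$ (where $2p=1$ forces the logarithmic bound for $c_2$) is exactly the point the paper handles the same way.
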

\begin{proof}
	The result follows from \cref{theo1} by using $\int_{1}^k t^{-p}dt\leq\sum_{t=1}^k t^{-p}\leq\int_{1}^k t^{-p}dt+1,\forall p>0$, $\int_{1}^k t^{-p}dt=\frac{k^{1-p}-1}{1-p}$ for $p>0,p\neq 1$ and $\int_{1}^k t^{-p}dt=\ln(k)$ for $p=1$.
\end{proof}

\cref{coro1} shows that DSGT converges at a rate of $O(1/k^{1-p})$ (up to a logarithm factor if $p=0.5$) if $\gamma_k=O(1/k^p)$ for any $p\in[0.5,1)$. In contrast, the existing decentralized algorithms (e.g. \cite{lian2017can,assran2018stochastic,lu2019gnsd}) do not report convergence results for diminishing stepsizes, and \cite{pu2018distributed2} analyzes the convergence rate only for $\gamma_k=O(1/k)$.

We finally provide a non-asymptotic result of DSGT for general convex objective functions. To this end, let 
\bee
R_c(k)=\frac{\sum_{t=1}^k\gamma_t\big(\bE[f(\bar\vx_{t})]-f^\star+\frac{L}{2}\bE[\|X_t-\bone\bar\vx_t^\T\|_\sF^2]\big)}{\sum_{t=1}^k\gamma_t}.
\ene

\begin{theorem}[The convex case]\label{theo2}
	Under the conditions in \cref{theo1}, if all $f_i,i\in\cV$ are convex and $f$ has a minimizer $\vx^\star$, i.e., $f(\vx^\star)=f^\star$, then we have
	\bea
	% &\sum_{t=1}^k\gamma_t\Big(\bE[f(\bar\vx_{t})]-f^\star+L\|X_t-\bone\bar\vx_t^\T\|_\sF^2\Big)\\
	R_c(K)\leq\frac{1}{\sum_{t=1}^K\gamma_t}\Big(\frac{3n\|\bar\vx_1-\vx^\star\|}{\eta}+\frac{5\sigma_s^2}{\eta n}\sum_{t=1}^K\gamma_t^2+\frac{48\rho^2(1+\sqrt{\rho})^2\tilde L\sigma_s^2}{(1-\rho)^3}\sum_{t=1}^K\gamma_t^3+6\gamma_1C\Big).
	\ena
\end{theorem}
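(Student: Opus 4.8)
The plan is to mirror the two‑part structure used to prove \cref{theo1}: I would keep the consensus‑error and gradient‑tracking‑error estimates developed there essentially untouched, and replace only the ``optimization'' half of the argument, where convexity now lets me control the function‑value gap directly rather than through a descent lemma on $f(\bar\vx_k)$. Since $W$ is doubly stochastic, $\bar\vx_{k+1}=\bar\vx_k-\gamma_k\bar\vy_k$ holds exactly, and \cref{eq3_sec2} gives $\bE[\bar\vy_k\mid\cF_{k-1}]=\eta\vg_k$ with $\vg_k=\tfrac1n\sum_{i=1}^n\nabla f_i(\vx_{i,k})$. First I would expand $\|\bar\vx_{k+1}-\vx^\star\|^2$ and take conditional expectations to get
\[
\bE[\|\bar\vx_{k+1}-\vx^\star\|^2\mid\cF_{k-1}]=\|\bar\vx_k-\vx^\star\|^2-2\gamma_k\eta\langle\vg_k,\bar\vx_k-\vx^\star\rangle+\gamma_k^2\bE[\|\bar\vy_k\|^2\mid\cF_{k-1}].
\]
For the cross term I would combine convexity of each $f_i$, i.e.\ $\langle\nabla f_i(\vx_{i,k}),\vx_{i,k}-\vx^\star\rangle\ge f_i(\vx_{i,k})-f_i(\vx^\star)$, with the $L_i$‑smoothness descent inequality between $\vx_{i,k}$ and $\bar\vx_k$; averaging over $i$ yields
\[
\langle\vg_k,\bar\vx_k-\vx^\star\rangle\ge\tfrac1n\big(f(\bar\vx_k)-f^\star\big)-\tfrac{L}{2n}\|X_k-\bone\bar\vx_k^\T\|_\sF^2 .
\]

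Next I would bound the noise term: writing $\bar\vy_k=\partial_k^\T\bone/n$ and using conditional unbiasedness and independence of the mini‑batch draws across nodes, $\bE[\|\bar\vy_k\|^2\mid\cF_{k-1}]\le\eta^2\|\vg_k\|^2+\tfrac{\sigma_s^2}{n^2}$ plus a $\lambda$‑dependent piece that \cref{assum1}(c) controls by $\lambda^2\eta^2$ times gradient norms. Then $\|\vg_k\|^2\le\tfrac{2}{n^2}\|\nabla f(\bar\vx_k)\|^2+\tfrac{2L^2}{n}\|X_k-\bone\bar\vx_k^\T\|_\sF^2$ together with the convex‑smooth bound $\|\nabla f(\bar\vx_k)\|^2\le 2nL\,(f(\bar\vx_k)-f^\star)$ reduces everything to the gap $f(\bar\vx_k)-f^\star$, the consensus error, and $\sigma_s^2$. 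The role of the stepsize condition $\gamma_k\le\gamma$ of \cref{theo1} is precisely to make the $\gamma_k^2$‑multiples of these quadratic terms small enough to be absorbed: the gap part into $-\tfrac{2\gamma_k\eta}{n}(f(\bar\vx_k)-f^\star)$, and the consensus feedback into the consensus recursion. This leaves an expectation recursion of the form
\[
\bE[\|\bar\vx_{k+1}-\vx^\star\|^2]\le\bE[\|\bar\vx_k-\vx^\star\|^2]-c_1\gamma_k\bE[f(\bar\vx_k)-f^\star]+c_2\gamma_k\bE[\|X_k-\bone\bar\vx_k^\T\|_\sF^2]+c_3\gamma_k^2\sigma_s^2
\]
with explicit positive constants $c_1,c_2,c_3$.

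Finally I would invoke the consensus/tracking‑error estimate already established in the proof of \cref{theo1}, which controls $\sum_{t=1}^K\gamma_t\bE[\|X_t-\bone\bar\vx_t^\T\|_\sF^2]$ by a term of order $\tfrac{\rho^2(1+\sqrt{\rho})^2\tilde L\sigma_s^2}{(1-\rho)^3}\sum_t\gamma_t^3$, the initial contribution $\gamma_1 C$, and a self‑feedback term in $\sum_t\gamma_t^3\|\vg_t\|^2$ that is again absorbed when $\gamma_k\le\gamma$. Telescoping the last recursion over $t=1,\dots,K$, discarding $\bE[\|\bar\vx_{K+1}-\vx^\star\|^2]\ge0$, substituting the consensus bound (which simultaneously supplies the extra $\tfrac L2\|X_t-\bone\bar\vx_t^\T\|_\sF^2$ term appearing in $R_c$), and dividing by $\sum_{t=1}^K\gamma_t$ produces the claimed inequality; tracking the constants through the absorptions yields the coefficients $\tfrac{3n\|\bar\vx_1-\vx^\star\|}{\eta}$, $\tfrac{5\sigma_s^2}{\eta n}$, and $\tfrac{48\rho^2(1+\sqrt{\rho})^2\tilde L\sigma_s^2}{(1-\rho)^3}$. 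I expect the main obstacle to be the bookkeeping around the $(1-\rho)$ factors: one must check that the single stepsize threshold $\gamma$ of \cref{theo1} still simultaneously dominates every $\gamma_k^2$ self‑feedback coefficient arising in the convex argument — in particular the one generated by $\|\nabla f(\bar\vx_k)\|^2\le 2nL(f(\bar\vx_k)-f^\star)$ — so that no additional, smaller stepsize restriction is needed and the same $\gamma$ as in \cref{theo1} suffices.
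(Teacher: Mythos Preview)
Your plan is correct and very close to the paper's argument in its first half: the paper also expands $\|\bar\vx_{k+1}-\vx^\star\|^2$, uses convexity of each $f_i$ together with smoothness to obtain exactly the lower bound $\langle\vg_k,\bar\vx_k-\vx^\star\rangle\ge\frac{1}{n}(f(\bar\vx_k)-f^\star)-\frac{L}{2n}\|X_k-\bone\bar\vx_k^\T\|_\sF^2$ (stated there through the auxiliary linearization $\tilde f_k$), and at the end also applies $\|\nabla f(\bar\vx_k)\|^2\le 2nL(f(\bar\vx_k)-f^\star)$ to absorb a gradient‑norm term into the function‑value gap.

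The genuine difference is in how the remaining pair of terms $\sum_t\gamma_t^2\bE[\|\bar\vy_t\|^2]$ and $\sum_t\gamma_t\bE[\|X_t-\bone\bar\vx_t^\T\|_\sF^2]$ is handled. You propose to unwrap $\bE[\|\bar\vy_k\|^2]$ pointwise via $\eta^2\|\vg_k\|^2+\sigma_s^2/n^2+\lambda$‑terms, convert $\|\vg_k\|^2$ to $(f(\bar\vx_k)-f^\star)$ plus consensus error, and then close the resulting two‑variable feedback system by direct absorption against the small‑stepsize condition. The paper instead observes that after telescoping, the right‑hand side equals $\frac{\eta}{Ln^2}\,\psi$ where $\psi=\frac{Ln^2}{\eta}\sum_t\gamma_t^2\bE[\|\bar\vy_t\|^2]+2nL^2\sum_t\gamma_t\bE[\|X_t-\bone\bar\vx_t^\T\|_\sF^2]$ is \emph{exactly} the quantity already bounded by the linear‑programming argument (\cref{lemma3} and \cref{lemma4}) in the proof of \cref{theo1}; it therefore reuses the bound $\psi\le p^\star$ verbatim, converts the residual $\tfrac{1}{3}\sum_t\gamma_t\bE[\|\nabla f(\bar\vx_t)\|^2]$ in $p^\star$ into a function‑gap term via $\|\nabla f\|^2\le 2nL(f-f^\star)$, and subtracts. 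Your route is more elementary and avoids invoking the LP explicitly, but it forces you to redo the coupling/absorption bookkeeping and makes recovering the stated constants $3n/\eta$, $5/(\eta n)$, $48\rho^2(1+\sqrt\rho)^2\tilde L/(1-\rho)^3$ harder; the paper's route is more modular and delivers those constants immediately because they are just $\frac{\eta}{Ln^2}$ times the coefficients in $p^\star$.
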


The proof is deferred to Section \ref{sec_pfthm}. Note that \cref{theo2} cannot be trivially derived from \cref{theo1} and the convexity of $f_i,i\in\cV$. Similarly, we can characterize the convergence rates for constant and diminishing stepsizes in the convex case, but we omit it here for saving space.

\section{Proof of Theorem \ref{theo1} and \ref{theo2}}
\label{sec_pfthm}

This section is mainly devoted to the proof of Theorems \ref{theo1} and \ref{theo2}.  

Our main idea lies in the use of two important quantities $\sum_{t=1}^{K}\gamma_t\bE[\|X_{t}-\bone\bar\vx_{t}^\T\|_\sF^2]$ and $\sum_{t=1}^K\gamma_t^2\bE[\|\bar\vy_t\|^2]$, which respectively measure the cumulative consensus error and the optimality gap. We bound them  by each other via two interlacing linear inequalities, which are regarded as two linear constraints of a linear programming (LP) problem. Then, $R(K)$ is bounded via a  positive linear combination of those two quantities, which is considered as the objective function of the LP problem. By explicitly solving the LP problem, a good upper bound of $R(K)$ is derived.  In short, our theoretical results are derived by using a LP based approach. To this end, we begin with several lemmas for deriving the linear inequality constraints.

\subsection{Preliminary lemmas}

The following lemma characterizes the consensus errors of $X_{k+1}$ and $Y_{k+1}$.

\begin{lemma}\label{lemma1}
	Let $0\leq\gamma_k\leq\frac{(1-\rho)^2}{\eta\tilde L(1+\rho)^2},\forall k$ and \cref{assum1} hold. We have
	\bea\label{eq1_s2}
	&\bE[\|X_{k+1}-\bone\bar\vx_{k+1}^\T\|_\sF^2]\leq \frac{2\rho^2}{1+\rho^2}\bE[\|X_k-\bone\bar\vx_k^\T\|_\sF^2]+\frac{2\rho^2\gamma_k^2}{1-\rho^2}\bE[\|Y_k-\bone\vy_k^\T\|_\sF^2],
	\ena
	and
\bea\label{eq1_s1}
	&\bE[\|Y_{k+1}-\bone\bar\vy_{k+1}^\T\|_\sF^2]\leq4\sigma_s^2+\frac{2n(\gamma_k\eta \tilde L)^2}{1-\rho^2}\bE[\|\bar\vy_k\|^2]+6(\lambda\eta)^2n\bE[\|\nabla f(\bar\vx_k)\|^2]\\
	&\quad+\Big(\frac{2}{1+\rho^2}+\frac{4(\gamma_k\eta \tilde L)^2}{1-\rho^2}\Big)\rho^2\bE[\|Y_{k}-\bone\bar\vy_k^\T\|_\sF^2]+\frac{4(\eta \tilde L(1+\rho))^2}{1-\rho^2}\bE[\|X_{k}-\bone\bar\vx_k^\T\|_\sF^2].
	\ena
\end{lemma}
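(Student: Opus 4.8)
The plan is to analyze the two recursions in \cref{alg} by projecting onto the consensus subspace and its orthogonal complement, using the fact that $W$ is doubly stochastic so that $\|W-\frac1n\bone\bone^\T\|=\rho<1$ and $W$ commutes with the averaging projection $P\triangleq I-\frac1n\bone\bone^\T$. First I would establish the $X$-recursion \cref{eq1_s2}. From $X_{k+1}=W(X_k-\gamma_k Y_k)$ and $\bar\vx_{k+1}^\T=\bar\vx_k^\T-\gamma_k\bar\vy_k^\T$ (since $W$ is doubly stochastic and preserves the row-average), one gets $X_{k+1}-\bone\bar\vx_{k+1}^\T=W(X_k-\bone\bar\vx_k^\T)-\gamma_k W(Y_k-\bone\bar\vy_k^\T)$, where I have inserted $\bone\bar\vx_k^\T$ and $\bone\bar\vy_k^\T$ using $W\bone=\bone$. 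Taking Frobenius norms, using $\|W(X_k-\bone\bar\vx_k^\T)\|_\sF\le\rho\|X_k-\bone\bar\vx_k^\T\|_\sF$, and then applying Young's inequality $\|a+b\|^2\le(1+\theta)\|a\|^2+(1+\theta^{-1})\|b\|^2$ with the choice $\theta=\frac{1-\rho^2}{2\rho^2}$ (so that $(1+\theta)\rho^2=\frac{1+\rho^2}{2}$), one obtains $\|X_{k+1}-\bone\bar\vx_{k+1}^\T\|_\sF^2\le\frac{1+\rho^2}{2}\|X_k-\bone\bar\vx_k^\T\|_\sF^2+\frac{(1+\theta^{-1})\rho^2\gamma_k^2}{1}\|Y_k-\bone\bar\vy_k^\T\|_\sF^2$; a short computation of $1+\theta^{-1}=\frac{1+\rho^2}{1-\rho^2}$ combined with a crude bound $\frac{1+\rho^2}{2}\le\frac{2\rho^2}{1+\rho^2}$... — actually the clean route is to pick $\theta$ so the first coefficient is exactly $\frac{2\rho^2}{1+\rho^2}$, i.e. $(1+\theta)\rho^2=\frac{2\rho^2}{1+\rho^2}$, giving $1+\theta=\frac{2}{1+\rho^2}$ and $1+\theta^{-1}=\frac{2}{1-\rho^2}$, which reproduces the stated constants exactly. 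Taking expectations closes \cref{eq1_s2}.

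Next I would handle the $Y$-recursion \cref{eq1_s1}, which is the more delicate part. Writing $Y_{k+1}-\bone\bar\vy_{k+1}^\T=W(Y_k-\bone\bar\vy_k^\T)+P(\partial_{k+1}-\partial_k)$ where $P=I-\frac1n\bone\bone^\T$, I would first split off the stochastic gradient difference. The idea is to decompose $\partial_{k+1}-\partial_k=(\partial_{k+1}-\eta\nabla F(X_{k+1}))-(\partial_k-\eta\nabla F(X_k))+\eta(\nabla F(X_{k+1})-\nabla F(X_k))$. The first two terms are mean-zero conditioned on the appropriate $\sigma$-fields and, using \cref{assum1}(c) together with $\|P\|\le1$, contribute the $4\sigma_s^2$ term (two variance terms, each bounded by $\sigma_s^2$, combined with a factor $2$ from $\|a-b\|^2\le 2\|a\|^2+2\|b\|^2$ — or one keeps them together and gets $4\sigma_s^2$ after using that conditioning removes cross terms) plus the $6(\lambda\eta)^2 n\bE\|\nabla f(\bar\vx_k)\|^2$ term (from the $\lambda^2\|\eta\nabla f\|^2$ part of the variance bound, after translating $\nabla f$ at the iterates back to $\nabla f(\bar\vx_k)$ via Lipschitz smoothness — this is where the constant $6$ and the extra consensus-error cross terms enter). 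The deterministic part $\eta(\nabla F(X_{k+1})-\nabla F(X_k))$ is bounded in Frobenius norm by $\eta\tilde L$ times $\|X_{k+1}-X_k\|_\sF$ (using $L\le\tilde L$ and smoothness componentwise), and $X_{k+1}-X_k=WX_k-\gamma_kWY_k-X_k$ must itself be decomposed into a consensus-direction piece $-\gamma_k\bone\bar\vy_k^\T$ (contributing the $\frac{2n(\gamma_k\eta\tilde L)^2}{1-\rho^2}\bE\|\bar\vy_k\|^2$ term) and an orthogonal piece $(W-I)(X_k-\bone\bar\vx_k^\T)-\gamma_kW(Y_k-\bone\bar\vy_k^\T)$ (contributing the $\|X_k-\bone\bar\vx_k^\T\|_\sF^2$ and $\|Y_k-\bone\bar\vy_k^\T\|_\sF^2$ cross terms, via $\|W-I\|\le1+\rho$ on the orthogonal complement). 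Then I would apply Young's inequality several times — once to separate $\|W(Y_k-\bone\bar\vy_k^\T)\|_\sF$ from the $\partial$-difference with parameter $\theta=\frac{1-\rho^2}{2\rho^2}$ again (yielding the leading $\frac{2\rho^2}{1+\rho^2}$ coefficient on $\|Y_k-\bone\bar\vy_k^\T\|_\sF^2$), and additional times to distribute the norm-squared of the deterministic difference across its pieces, collecting everything into the coefficients displayed in \cref{eq1_s1}. The stepsize restriction $\gamma_k\le\frac{(1-\rho)^2}{\eta\tilde L(1+\rho)^2}$ is what keeps the auxiliary $\frac{4(\gamma_k\eta\tilde L)^2}{1-\rho^2}$ corrections small enough that the $\rho^2$-contraction survives.

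I expect the main obstacle to be bookkeeping the many Young's-inequality splittings in the $Y$-recursion so that the final constants are exactly $4$, $\frac{2n(\gamma_k\eta\tilde L)^2}{1-\rho^2}$, $6(\lambda\eta)^2n$, $\bigl(\frac{2}{1+\rho^2}+\frac{4(\gamma_k\eta\tilde L)^2}{1-\rho^2}\bigr)\rho^2$, and $\frac{4(\eta\tilde L(1+\rho))^2}{1-\rho^2}$ rather than merely $O(\cdot)$ versions — in particular, choosing the Young parameters consistently so that the cross terms between the stochastic and deterministic parts, and between the $X$- and $Y$-consensus errors, land in the right buckets. A secondary subtlety is the conditioning: $\partial_{k+1}$ depends on $X_{k+1}$ which is $\cF_k$-measurable, so $\bE[\partial_{k+1}\mid\cF_k]=\eta\nabla F(X_{k+1})$ and the variance bound of \cref{assum1}(c) applies with the field $\cF_k$; meanwhile $\partial_k$ is $\cF_{k-1}$-conditionally unbiased, so the cross term $\bE[\langle P(\partial_{k+1}-\eta\nabla F(X_{k+1})),\,W(Y_k-\bone\bar\vy_k^\T)-P(\partial_k-\eta\nabla F(X_k))\rangle]$ vanishes by the tower property, which is what prevents a spurious $\sigma_s\cdot(\text{consensus error})$ term. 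Once the recursions are assembled, taking total expectations and using $\|P\|\le1$ throughout gives the stated inequalities.
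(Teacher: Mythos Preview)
Your treatment of the $X$-recursion is correct and matches the paper: the paper also expands the square, bounds the cross term via $2\va^\T\vb\le c\|\va\|^2+c^{-1}\|\vb\|^2$ with the same parameter choice, and takes expectations.

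For the $Y$-recursion your overall decomposition agrees with the paper's, but there is a genuine gap beyond bookkeeping. Write, as you do, $D=(W-J)Y_k$, $A=(I-J)(\partial_{k+1}-\eta\nabla_{k+1})$, $B=-(I-J)(\partial_k-\eta\nabla_k)$, $C=\eta(I-J)(\nabla_{k+1}-\nabla_k)$. You correctly note that $\bE\langle A,\,D+B\rangle=0$ (and likewise $\bE\langle A,C\rangle=0$) by conditioning on $\cF_k$. But the two remaining cross terms $\bE\langle D,B\rangle$ and $\bE\langle B,C\rangle$ do \emph{not} vanish: $Y_k$ depends on $\partial_k$ through the tracking update $Y_k=WY_{k-1}+\partial_k-\partial_{k-1}$, and $\nabla_{k+1}=\nabla F(X_{k+1})$ depends on $\partial_k$ through $X_{k+1}=W(X_k-\gamma_kY_k)$. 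If you handle these via Young's inequality, the $\|B\|^2$ contribution inherits the inverse Young parameter (of order $(1-\rho^2)^{-1}$ once you pin the $\|D\|^2$ coefficient to $\tfrac{2\rho^2}{1+\rho^2}$), so you end up with a $\sigma_s^2/(1-\rho^2)$ term rather than the stated $4\sigma_s^2$; no choice of Young parameters avoids this.

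The paper instead bounds these two cross terms directly. For $\bE\langle D,B\rangle$ it substitutes $Y_k=WY_{k-1}+\partial_k-\partial_{k-1}$, observes that $WY_{k-1}-\partial_{k-1}$ is $\cF_{k-1}$-measurable and hence uncorrelated with $\partial_k-\eta\nabla_k$, and is left with $\bE[(\partial_k-\eta\nabla_k)^\T(W-J)(\eta\nabla_k-\partial_k)]\le\rho\sigma_s^2$. For $\bE\langle B,C\rangle$ it expands $\nabla_{k+1}$ around the $\cF_{k-1}$-measurable point $U_k-\gamma_kW\eta\nabla_k$ (with $U_k=WX_k-\gamma_kW(WY_{k-1}-\partial_{k-1})$), uses Lipschitz smoothness, and obtains a bound of $\gamma_k\eta L\sigma_s^2$. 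Together with $\bE\|A+B\|^2\le 2\sigma_s^2+\ldots$ these give $2+2\rho+2\gamma_k\eta L\le4$ under the stepsize restriction, which is precisely how the constant $4\sigma_s^2$ arises. Your proposal should incorporate these two direct cross-term computations; after that, the remaining Young splittings and the smoothness bound on $\|\nabla_{k+1}-\nabla_k\|_\sF^2$ (which also needs the decomposition of $X_{k+1}-X_k$ you describe) deliver the stated coefficients.
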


\begin{proof}
	See \cref{adx0}.
\end{proof}

\begin{remark}
	The proof of \cref{lemma1} mainly relies on the contraction property of the mixing matrix $W$, i.e., $\|WX-\bone\bar\vx\|\leq\rho\|X-\bone\bar\vx\|$ where $\bar\vx=\frac{1}{n}\bone\bone^\T$. We note that similar results has been reported in \cite{pu2018distributed2}  and \cite{qu2017harnessing}. However, Eq. (18) in \cite{qu2017harnessing} focuses only on convex case and considers full gradient rather than SG methods. \cite[Lemma 4]{pu2018distributed2} studies only strongly convex functions and assumes $\lambda=0$. Moreover, we carefully optimized the coefficients in \cref{eq1_s2} and \cref{eq1_s1}.
\end{remark}

The upper bound of $\bE[\|X_{k+1}-\bone\bar\vx_{k+1}^\T\|_\sF^2]$ in \cref{lemma1} depends on $\bE[\|Y_k-\bone\vy_k^\T\|_\sF^2]$. The following lemma removes this dependence by adopting an eigendecomposition method.

\begin{lemma}\label{lemma2}
	Let the conditions in \cref{lemma1} hold and $\{\gamma_k\}$ be non-increasing. Let
	$
	\theta\triangleq\frac{2\rho^2}{1+\rho^2}+\frac{2(\rho\gamma\eta\tilde L)^2}{1-\rho^2}+\frac{2\rho\gamma\eta\tilde L\sqrt{(\rho\gamma\eta\tilde L)^2+2(1+\rho)^2}}{1-\rho^2}.
	$
	We have $\theta\leq\frac{\sqrt{\rho}+\rho}{1+\rho}<1$ and
	\bea\label{eq32_s3}
% &\bE[\|X_{k+1}-\bone\bar\vx_{k+1}^\T\|_\sF^2]\leq C_{0}(k)+\sum_{t=1}^{k}(k-t)\theta^{k-t}\frac{2\rho^2\gamma_t^2}{1-\rho^2}\Big(\frac{2n(\eta \tilde L \gamma_t)^2}{1-\rho^2}\bE[\|\bar\vy_{t}\|^2]+4\sigma_s^2+6n(\eta\lambda)^2\bE[\|\nabla f(\bar\vx_t)\|^2]\Big)\\
&\bE[\|X_{k+1}-\bone\bar\vx_{k+1}^\T\|_\sF^2]\\
&\leq C_{0}(k)+\sum_{t=1}^{k}(k-t)\theta^{k-t-1}\gamma_t^2\Big(C_1\gamma_t^2\bE[\|\bar\vy_{t}\|^2]+C_2+C_3\bE[\|\nabla f(\bar\vx_t)\|^2]\Big)
\ena
	where
	\bea\label{eq4_s3}
	&C_0(k)=\theta^k\Big(\|X_1-\bone\bar\vx_1^\T\|_\sF^2+\frac{2\rho^2\gamma_t^2k}{(1-\rho^2)\theta}\bE[\|Y_1-\bone\bar\vy_1^\T\|_\sF^2]\Big),\\
	&C_1=\frac{4n\rho^2(\eta \tilde L )^2}{(1-\rho^2)^2},\ C_2=\frac{8\rho^2\sigma_s^2}{1-\rho^2},\ C_3=\frac{12n\rho^2(\eta\lambda)^2}{1-\rho^2}.
	\ena
\end{lemma}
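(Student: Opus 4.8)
The plan is to start from the recursion \eqref{eq1_s2} in \cref{lemma1}, which bounds $\bE[\|X_{k+1}-\bone\bar\vx_{k+1}^\T\|_\sF^2]$ in terms of $\bE[\|X_k-\bone\bar\vx_k^\T\|_\sF^2]$ and $\bE[\|Y_k-\bone\vy_k^\T\|_\sF^2]$, and the recursion \eqref{eq1_s1}, which bounds $\bE[\|Y_{k+1}-\bone\bar\vy_{k+1}^\T\|_\sF^2]$ in terms of the same two quantities plus $\bE[\|\bar\vy_k\|^2]$, $\bE[\|\nabla f(\bar\vx_k)\|^2]$, and $\sigma_s^2$. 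Stacking $(\bE[\|X_k-\bone\bar\vx_k^\T\|_\sF^2],\ \gamma_k^2\bE[\|Y_k-\bone\bar\vy_k^\T\|_\sF^2])$ into a $2$-vector $v_k$ (the second coordinate scaled by $\gamma_k^2$ so that the non-increasing stepsize lets us absorb $\gamma_{k+1}^2\le\gamma_k^2$), the two inequalities become a single vector recursion $v_{k+1}\preceq A v_k + b_k$ for a fixed nonnegative $2\times2$ matrix $A$ (entries built from $\rho$, $\eta$, $\tilde L$, $\gamma$) and a nonnegative forcing term $b_k$ collecting $\gamma_t^2\bE[\|\bar\vy_t\|^2]$, $\gamma_t^2\sigma_s^2$, and $\gamma_t^2\bE[\|\nabla f(\bar\vx_t)\|^2]$. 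Iterating gives $v_{k+1}\preceq A^k v_1+\sum_{t=1}^k A^{k-t}b_t$, so everything reduces to controlling the first component of $A^{k-t}b_t$.

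Next I would diagonalize (or just bound the powers of) $A$. The quantity $\theta$ in the statement is designed to be an eigenvalue-type bound: a direct computation of the characteristic polynomial of the relevant $2\times2$ matrix, together with the stepsize restriction $\gamma\le\frac{(1-\rho)^2}{\eta\tilde L(1+\rho)^2}$, shows the spectral radius is at most $\theta$, and the bound $\theta\le\frac{\sqrt\rho+\rho}{1+\rho}<1$ follows by plugging the worst-case $\gamma$ into the expression for $\theta$ and simplifying (the key algebraic fact is that at $\gamma=\frac{(1-\rho)^2}{\eta\tilde L(1+\rho)^2}$ the term $\rho\gamma\eta\tilde L$ reduces to $\frac{(1-\rho)^2}{(1+\rho)^2}\cdot\frac{\rho}{1}$, making the whole expression a function of $\rho$ alone). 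Because $A$ is $2\times2$ with a possible repeated eigenvalue, the $(k-t)$-th power carries at most a linear-in-$(k-t)$ prefactor, which is exactly the source of the $(k-t)\theta^{k-t-1}$ weight appearing in \eqref{eq32_s3}; I would either argue this via the Jordan form or, more elementarily, prove the scalar bound $\|A^j\|\le (j+1)\theta^{j}$ (up to a constant absorbed into the $C_i$) by induction. Reading off the first coordinate of $A^{k-t}b_t$ then produces the stated sum, with $C_1$ picking up the coefficient of $\gamma_t^2\bE[\|\bar\vy_t\|^2]$, $C_2$ the coefficient of $\sigma_s^2$, and $C_3$ the coefficient of $\bE[\|\nabla f(\bar\vx_t)\|^2]$; the homogeneous term $A^k v_1$ gives $C_0(k)$, where the factor $\frac{2\rho^2\gamma_t^2 k}{(1-\rho^2)\theta}$ again reflects the linear-in-$k$ prefactor and the off-diagonal entry $\frac{2\rho^2\gamma_k^2}{1-\rho^2}$ from \eqref{eq1_s2} that transfers the $Y_1$ contribution into the $X$-component.

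The main obstacle I expect is bookkeeping rather than conceptual: keeping the stepsize-dependent entries of $A$ uniformly bounded (using $\gamma_k\le\gamma$ and monotonicity to replace all $\gamma_k$ by $\gamma$ inside the matrix while keeping the genuinely $\gamma_t$-dependent factors in the forcing term $b_t$), verifying that the cross term in the characteristic polynomial is controlled so that $\theta$ as written is a valid envelope, and tracking the constants so they collapse into exactly $C_1,C_2,C_3$ as in \eqref{eq4_s3}. A secondary subtlety is that \eqref{eq1_s1} contains $\bE[\|Y_k-\bone\bar\vy_k^\T\|_\sF^2]$ with a coefficient $\big(\frac{2}{1+\rho^2}+\frac{4(\gamma_k\eta\tilde L)^2}{1-\rho^2}\big)\rho^2$ that is not literally one of the two diagonal entries I want; reconciling this with the $\theta$-recursion requires the same stepsize bound to show this coefficient is dominated by $\theta$, which is precisely why the definition of $\theta$ includes the extra $\frac{2(\rho\gamma\eta\tilde L)^2}{1-\rho^2}$ and cross terms. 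Once those estimates are in place, the claimed inequality \eqref{eq32_s3} follows by summation.
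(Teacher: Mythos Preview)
Your proposal is correct and matches the paper's approach: the paper also stacks the two consensus errors into a $2$-vector, obtains the matrix recursion (with a $k$-dependent matrix $P_k$ rather than your fixed $A$ obtained via $\gamma_k^2$-scaling, then uses monotonicity of $\{\gamma_k\}$ to bound $\prod_i P_i\preceq P_t^{k-t}$), explicitly eigendecomposes $P_t$, and extracts the $(k-t)\theta^{k-t-1}$ factor from the elementary identity $\theta_t^{j}-\underline\theta_t^{j}=(\theta_t-\underline\theta_t)\sum_{l=0}^{j-1}\theta_t^{l}\underline\theta_t^{\,j-1-l}\le \Psi_t\,j\,\theta^{j-1}$, which holds regardless of whether the eigenvalues coincide (so the Jordan-form framing is unnecessary). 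One minor point: the coefficient $\big(\tfrac{2}{1+\rho^2}+\tfrac{4(\gamma_k\eta\tilde L)^2}{1-\rho^2}\big)\rho^2$ that you flag as a subtlety \emph{is} precisely the $(2,2)$ entry $P_k^{22}$ of the recursion matrix and enters $\theta$ directly through the eigenvalue formula, so no separate domination argument is required.
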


\begin{proof}
It follows from \cref{lemma1} that for all $k\geq1$,
\bea\label{eq_lmi}
\underbrace{\begin{bmatrix}
		\bE[\|X_{k+1}-\bone\bar\vx_{k+1}^\T\|_\sF^2] \\
		\bE[\|Y_{k+1}-\bone\bar\vy_{k+1}^\T\|_\sF^2]
\end{bmatrix}}_{\textstyle \triangleq \vz_{k+1}}&\preccurlyeq
\underbrace{\begin{bmatrix}
	P_k^{11}            & P_k^{12}          \\
	P_k^{21} & P_k^{22}
\end{bmatrix}}_{\textstyle \triangleq P_k}
\underbrace{\begin{bmatrix}
		\bE[\|X_{k}-\bone\bar\vx_{k}^\T\|_\sF^2] \\
		\bE[\|Y_{k}-\bone\bar\vy_{k}^\T\|_\sF^2]
\end{bmatrix}}_{\textstyle \triangleq \vz_{k}}\\
&\!\! +\underbrace{\begin{bmatrix}
		0 \\
		\frac{2n(\eta \tilde L \gamma_k)^2}{1-\rho^2}\bE[\|\bar\vy_{k}\|^2]+6n(\eta\lambda)^2\bE[\|\nabla f(\bar\vx_k)\|^2]+4\sigma_s^2
\end{bmatrix}}_{\textstyle \triangleq \vu_{k}}
\ena
where $\preccurlyeq$ denotes the element-wise less than or equal sign, and
\bea\label{eq_p}
P_k^{11}=\frac{2\rho^2}{1+\rho^2},\ P_k^{12}=\frac{2\rho^2\gamma_k^2}{1-\rho^2},\ 
P_k^{21}=\frac{4(\eta \tilde L(1+\rho))^2}{1-\rho^2},\ P_k^{22}=\frac{2\rho^2}{1+\rho^2}+\frac{4(\rho\gamma_k\eta \tilde L)^2}{1-\rho^2}
\ena

Clearly, we have $P_k\preccurlyeq P_t$ and $\prod_{i=t}^k P_i=P_kP_{k-1}\cdots P_t\preccurlyeq P_t^{k-t+1},\forall k\geq t\geq 1$ since $P_k$ is nonnegative and $\{\gamma_k\}$ is non-increaseing. It then follows from the above element-wise linear matrix inequality that
\bee\label{eq1_s3}
\vz_{k+1}\preccurlyeq \prod_{i=1}^k P_i\vz_1+\sum_{t=1}^{k}\prod_{i=t+1}^{k} P_{i}\vu_{t}\preccurlyeq P_1^{k}\vz_1+\sum_{t=1}^{k}P_{t}^{k-t}\vu_{t},\ \forall k\geq1.
\ene

We need to bound $P_1^{k}$ and $P_{t}^{k-t}$. To this end, consider the eigendecomposition $P_t=T_t \Lambda_t T_t^{-1}$ with $\Lambda_t=\diag(\underline{\theta}_t,\theta_t)$, where $\theta_t$ and $\underline{\theta}_t$ are the two eigenvalues of $P_t$ and $|\theta_t|\geq|\underline{\theta}_t|$. Let $\Psi_t\triangleq\sqrt{(P_t^{11}-P_t^{22})^2+4P_t^{12}P_t^{21}}$. With some tedious calculations, we have
\bea\label{eq_theta}
\centering
&\theta_t=\frac{P_t^{11}+P_t^{22}+\Psi_t}{2}=\frac{2\rho^2}{1+\rho^2}+\frac{2(\rho\gamma_t\eta\tilde L)^2}{1-\rho^2}+\frac{2\rho\gamma_t\eta\tilde L\sqrt{(\rho\gamma_t\eta\tilde L)^2+2(1+\rho)^2}}{1-\rho^2},\\
&\underline{\theta}_t=\frac{P_t^{11}+P_t^{22}-\Psi_t}{2},\\
&T_t=\begin{bmatrix}
	\frac{P_t^{11}-P_t^{22}-\Psi_t}{2P_t^{21}} & \frac{P_t^{11}-P_t^{22}+\Psi_t}{2P_t^{21}} \\
	1                                                                                                      & 1
\end{bmatrix},\
T_t^{-1}=\begin{bmatrix}
	-\frac{P_t^{21}}{\Psi_t} & \frac{P_t^{11}-P_t^{22}+\Psi_t}{2\Psi_t} \\	\frac{P_t^{21}}{\Psi_t} & \frac{P_t^{22}-P_t^{11}+\Psi_t}{2\Psi_t}
\end{bmatrix}, 
\ena
Hence, for any $k\geq0$, we have
\bee\label{eq5_s3}
P_t^k=T_t \Lambda_t^k T_t^{-1}\preccurlyeq\begin{bmatrix}
	\frac{\underline{\theta}_t^k+\theta_t^k}{2}+\frac{\left(P_t^{11}-P_t^{22}\right)\left(\theta_t^k-\underline{\theta}_t^k\right)}{2 \Psi_t} & \frac{P_t^{12}}{\Psi_t}\left(\theta_t^k-\underline{\theta}_t^k\right) & \\
	\frac{P_t^{21}}{\Psi_t}\left(\theta_t^k-\underline{\theta}_t^k\right) & \frac{\underline{\theta}_t^k+\theta_t^k}{2}+\frac{\left(P_t^{11}-P_t^{22}\right)\left(\underline{\theta}_t^k-\theta_t^k\right)}{2 \Psi_t} 
	\end{bmatrix}
\ene
Note that $-\theta\leq\underline{\theta}_t\leq\theta_t\leq\theta<1,\forall t$. In fact, $\theta$ is obtained by replacing  $\gamma_t$ in $\theta_t$ with its upper-bound $\gamma$, and the relation follows from that $\theta_t$ is increasing with $\gamma_t$.

Let $P_1^k\vz_1[1,:]$ be the first row of $P_1^k\vz_1$. It follows from \cref{eq_lmi} and \cref{eq5_s3}  that
\bea\label{eq2_s3}
&P_1^k\vz_1[1,:]\preccurlyeq \theta^k(\bE[\|X_1-\bone\bar\vx_1^\T\|_\sF^2]+P_1^{12}k/\theta\bE[\|Y_1-\bone\bar\vy_1^\T\|_\sF^2]) 
\ena
where we used $\underline{\theta}_t\leq\theta_t\leq\theta,\forall t$ and $\theta_t^k-\underline{\theta}_t^k=(\theta_t-\underline{\theta}_t)\sum_{l=0}^{k-1}\theta_t^l\underline{\theta}_t^{k-1-l}=\Psi_tk\theta_t^{k-1}$.

Let $P_{t}^{k-t}\vu_{t}[1]$ be the first element of $P_{t}^{k-t}\vu_{t}$. Similarly, it follows from \cref{eq5_s3} and \cref{eq_lmi} that
\bea\label{eq21_s3}
P_{t}^{k-t}\vu_{t}[1]\leq (k-t)\theta^{k-t-1}P_t^{12}\Big(\frac{2n(\eta \tilde L \gamma_k)^2}{1-\rho^2}\bE[\|\bar\vy_{k}\|^2]+6n(\eta\lambda)^2\bE[\|\nabla f(\bar\vx_k)\|^2]+4\sigma_s^2\Big).
\ena
The desired result \cref{eq32_s3} then follows from the combination of \cref{eq1_s3}, \cref{eq2_s3}, \cref{eq21_s3} and the definition of $P_t^{12}$ in \cref{eq_p}.

It remains to prove that $\theta\leq\frac{\sqrt{\rho}+\rho}{1+\rho}$. Recall that $\gamma\eta \tilde L\leq\frac{(1-\rho)^2}{(1+\rho)^2}$, and substituting the upper-bound into $\theta$ yields a polynomial fraction in $\rho$. Thus, showing that $\theta\leq\frac{\sqrt{\rho}+\rho}{1+\rho}$ can be transformed into checking the positiveness of some polynomial w.r.t. $\rho$ over $\rho\in(0,1)$, which can be easily solved by finding the minimum value or simply plotting the curve. We omit the details here to save space.
\end{proof}

We are ready to provide the first linear inequality constraint on $\sum_{t=1}^{K}\gamma_t\bE[\|X_{t}-\bone\bar\vx_{t}^\T\|_\sF^2]$ and $\sum_{t=1}^K\gamma_t^2\bE[\|\bar\vy_t\|^2]$, which directly follows from \cref{lemma2}.

\begin{lemma}\label{lemma3}
	Let the conditions in \cref{lemma1} hold and $\{\gamma_k\}$ be non-increasing. We have
	\bea\label{eq7_s4}
&\sum_{t=1}^{k}\gamma_t\bE[\|X_{t}-\bone\bar\vx_{t}^\T\|_\sF^2]\\
&\leq \tilde C_0+\frac{1}{(1-\theta)^2}\sum_{t=1}^{k}\gamma_t^3\Big(C_1\gamma_t^2\bE[\|\bar\vy_{t}\|^2]+C_2+C_3\bE[\|\nabla f(\bar\vx_t)\|^2]\Big)
\ena
where $\theta, C_1,C_2$ and $C_3$ are as defined in \cref{lemma2}, and
\bea
\tilde C_0=\frac{\gamma_1\theta}{1-\theta}\|X_1-\bone\bar\vx_1^\T\|_\sF^2+\frac{2\rho^2\gamma_1^3}{(1-\rho^2)(1-\theta)^2}\bE[\|Y_1-\bone\bar\vy_1^\T\|_\sF^2].
\ena
\end{lemma}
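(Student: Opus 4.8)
The plan is to obtain \cref{eq7_s4} simply by multiplying the per-iteration bound of \cref{lemma2} by $\gamma_t$ and summing. Concretely, for each $t\ge 2$ I would apply \cref{lemma2} with $k$ replaced by $t-1$, so that
\[
\bE[\|X_{t}-\bone\bar\vx_{t}^\T\|_\sF^2]\le C_0(t-1)+\sum_{s=1}^{t-1}(t-1-s)\,\theta^{\,t-2-s}\,\gamma_s^2\big(C_1\gamma_s^2\bE[\|\bar\vy_{s}\|^2]+C_2+C_3\bE[\|\nabla f(\bar\vx_s)\|^2]\big)
\]
holds, while for $t=1$ the left side equals the deterministic quantity $\|X_1-\bone\bar\vx_1^\T\|_\sF^2$. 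Multiplying by $\gamma_t\ge 0$ and summing over $t=1,\dots,k$ splits the desired bound into a ``homogeneous'' contribution $\sum_t\gamma_tC_0(t-1)$ (together with the $t=1$ term) and an ``inhomogeneous'' double sum.

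For the homogeneous part I would use that $\{\gamma_t\}$ is non-increasing to bound the leading $\gamma_t$ by $\gamma_1$; what remains in $\gamma_tC_0(t-1)$ has the form $\theta^{t-1}$ and $(t-1)\theta^{t-1}$ times constants, so summation plus the elementary identities $\sum_{j\ge 1}\theta^{j}=\theta/(1-\theta)$ and $\sum_{j\ge 1}j\theta^{j}=\theta/(1-\theta)^2$ (both finite since $\theta<1$ by \cref{lemma2}) collapse this contribution, after absorbing the $t=1$ term, into the constant $\tilde C_0$.

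The crux is the inhomogeneous double sum. Writing $a_s\triangleq\gamma_s^2\big(C_1\gamma_s^2\bE[\|\bar\vy_{s}\|^2]+C_2+C_3\bE[\|\nabla f(\bar\vx_s)\|^2]\big)$, I would interchange the order of summation to obtain $\sum_{s=1}^{k-1}a_s\sum_{t=s+1}^{k}\gamma_t\,(t-1-s)\,\theta^{\,t-2-s}$. Since $t>s$ and $\{\gamma_t\}$ is non-increasing, I bound $\gamma_t\le\gamma_s$ inside the inner sum, pull $\gamma_s$ out, substitute $j=t-1-s$, and use $\sum_{j\ge 0}j\,\theta^{\,j-1}=1/(1-\theta)^2$. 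This produces $\frac{1}{(1-\theta)^2}\sum_{s=1}^{k-1}\gamma_s a_s=\frac{1}{(1-\theta)^2}\sum_{s=1}^{k-1}\gamma_s^3\big(C_1\gamma_s^2\bE[\|\bar\vy_{s}\|^2]+C_2+C_3\bE[\|\nabla f(\bar\vx_s)\|^2]\big)$, and enlarging the summation range to include $s=k$ (a nonnegative term) yields exactly the sum in \cref{eq7_s4}.

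The main obstacle is purely bookkeeping: one must extract the extra factor $\gamma_s$ from the inner $t$-sum \emph{before} evaluating the geometric-type series, so that the $\gamma_s^2$ inside $a_s$ is promoted to the $\gamma_s^3$ in the statement — this is precisely where the non-increasing stepsize hypothesis enters — and one must track the index shift carefully, since \cref{lemma2} is stated for $X_{k+1}$ and re-indexing to $X_t$ turns the kernel $(k-t)\,\theta^{k-t-1}$ into $(t-1-s)\,\theta^{t-2-s}$. All series involved converge absolutely because $\theta<1$, so no further estimates are needed.
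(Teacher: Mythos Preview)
Your proposal is correct and follows essentially the same route as the paper: multiply the per-iteration bound of \cref{lemma2} by $\gamma_t$, sum, use $\gamma_t\le\gamma_s$ to promote $\gamma_s^2$ to $\gamma_s^3$, and collapse the geometric factors. The only cosmetic difference is that the paper isolates the double-sum estimate as a standalone auxiliary lemma (their \cref{lemma_gamma}, which shows $\sum_{t=1}^k a_t\le(1-\theta)^{-2}\sum_{t=1}^k s_t$ for $a_k=\sum_{t}s_t(k-t)\theta^{k-t-1}$) and proves it by differentiating the identity for $b_k=\sum_t s_t\theta^{k-t}$ with respect to $\theta$, whereas you obtain the same $1/(1-\theta)^2$ factor by directly swapping the order of summation; the two arguments are equivalent.
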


Note that $\gamma_t^3 C_1$ is upper-bounded by the constant $\gamma^3 C_1$. The proof relies on the following lemma.

\begin{lemma}\label{lemma_gamma}
	Let $\{s_k\}$ be a nonnegative sequence, $\theta$ be a constant in $(0,1)$, and $a_k=\sum_{t=1}^k s_t(k-t)\theta^{k-t-1}$. It holds that
	\bee
	\sum_{t=1}^k a_t\leq\frac{1}{(1-\theta)^2}\sum_{t=1}^k s_t,\ \forall \theta\in(0,1),k\in\bN.
	\ene
\end{lemma}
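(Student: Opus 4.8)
\textbf{Proof plan for Lemma \ref{lemma_gamma}.}
The plan is to prove the bound by simply interchanging the order of summation and then recognizing the resulting inner sum as a truncation of a convergent power series. First I would write
\[
\sum_{j=1}^k a_j = \sum_{j=1}^k \sum_{t=1}^j s_t\,(j-t)\theta^{j-t-1},
\]
and swap the two sums, which is legitimate since every summand is nonnegative (here $s_t\geq 0$, $\theta>0$). This gives
\[
\sum_{j=1}^k a_j = \sum_{t=1}^k s_t \sum_{j=t}^k (j-t)\theta^{j-t-1}
= \sum_{t=1}^k s_t \sum_{l=0}^{k-t} l\,\theta^{l-1},
\]
after the substitution $l=j-t$ in the inner sum.

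Next I would bound the inner sum $\sum_{l=0}^{k-t} l\,\theta^{l-1}$ uniformly in $t$ (and $k$). Since the $l=0$ term vanishes and all remaining terms are nonnegative, this partial sum is at most the full series $\sum_{l=1}^{\infty} l\,\theta^{l-1}$, which converges because $\theta\in(0,1)$. The closed form follows from differentiating the geometric series: $\sum_{l=1}^{\infty} l\,\theta^{l-1} = \frac{d}{d\theta}\sum_{l=0}^{\infty}\theta^{l} = \frac{d}{d\theta}\frac{1}{1-\theta} = \frac{1}{(1-\theta)^2}$. Substituting this uniform bound back yields
\[
\sum_{j=1}^k a_j \leq \frac{1}{(1-\theta)^2}\sum_{t=1}^k s_t,
\]
which is exactly the claim.

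There is no real obstacle here — the only points requiring a word of care are the nonnegativity of the summands (which justifies both the interchange of summation and the truncation-to-infinite-series step) and the standard power-series identity $\sum_{l\ge 1} l\theta^{l-1}=(1-\theta)^{-2}$. I would not expand further on routine details.
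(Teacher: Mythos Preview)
Your proof is correct. The paper takes a slightly different route: it introduces the auxiliary sequence $b_k=\sum_{t=1}^k s_t\theta^{k-t}$, observes that $a_k=\frac{db_k}{d\theta}$ as polynomials in $\theta$, uses the recursion $b_k=\theta b_{k-1}+s_k$ to get $\sum_{t=1}^k b_t\le\frac{1}{1-\theta}\sum_{t=1}^k s_t$, and then passes to the derivative to conclude. Your argument---swap the two sums and bound the resulting inner partial sum $\sum_{l=0}^{k-t} l\theta^{l-1}$ by the full series $(1-\theta)^{-2}$---is more elementary and entirely self-contained; in fact, the paper's final ``differentiate the inequality'' step is a bit informal as written, and making it precise amounts to exactly the computation you carry out.
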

\begin{proof}
	Let $b_k=\sum_{t=1}^k s_t\theta^{k-t}$. Taking derivative w.r.t. $\theta$ implies that $\frac{db_k}{d\theta}=a_k$. Moreover, set $b_0=0$ and we have $b_{k}=\theta b_{k-1}+s_k,\forall k\geq 1$. Summing this equality over $k=1,2,\dots$ implies that
	\bee
	\sum_{t=1}^k b_t=\theta\sum_{t=1}^{k} b_{t-1}+\sum_{t=1}^k s_t\leq\theta\sum_{t=1}^k b_t+\sum_{t=1}^k s_t.
	\ene
	Hence, we have $\sum_{t=1}^k b_t\leq\frac{1}{1-\theta}\sum_{t=1}^k s_t$. The result then follows from $\sum_{t=1}^k a_t=\frac{d}{d\theta}(\sum_{t=1}^k b_t)$.
\end{proof}

\begin{proof}[Proof of \cref{lemma3}]
The result directly follows from \cref{lemma2} and \cref{lemma_gamma}. More specifically, let $s_k$ and $a_k$ be $\gamma_k^3(C_1\gamma_k^2\bE[\|\bar\vy_k\|^2]+C_2+C_3\bE[\|\nabla f(\bar\vx_k)\|^2])$ and $a_k=\sum_{t=1}^k s_t(k-t)\theta^{k-t-1}$, respectively. We have from \cref{lemma2} that $\gamma_t\bE[\|X_{t}-\bone\bar\vx_{t}^\T\|_\sF^2]\leq a_t+ C_0(t)$. The desired result then follows from \cref{lemma_gamma} and the relation $\sum_{t=1}^\infty \theta^t=\frac{\theta}{1-\theta}$ and $\sum_{t=1}^\infty t\theta^t=\frac{1}{(1-\theta)^2}$.
\end{proof}

Finally, we provide the last lemma to introduce the second linear constraint. 
\begin{lemma}\label{lemma4}
	Let the conditions in \cref{lemma1} hold and $\{\gamma_k\}$ be non-increasing. We have
\bea\label{eq4_s5}
\sum_{t=1}^k\gamma_t^2\bE[\|\bar\vy_t\|^2]&\leq\frac{4\eta\gamma(f(\bar\vx_1)-f^\star)}{n(4-3\eta \tilde L\gamma)}+\frac{4\eta \tilde L}{n(4-3\eta \tilde L\gamma)}\sum_{t=1}^k\gamma_t\bE[\|X_t-\bone\vx_t^\T\|_\sF^2]\\
&\quad+\frac{4\lambda^2\eta^2}{4-3\eta \tilde L\gamma}\sum_{t=1}^k\gamma_t^2\bE[\|\nabla f(\bar\vx_t)\|^2]+\frac{4\sigma_s^2}{n^2(4-3\eta \tilde L\gamma)}\sum_{t=1}^k\gamma_t^2.
\ena
\end{lemma}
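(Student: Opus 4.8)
The plan is to run the usual descent argument for the running average $\bar\vx_k$ and then rearrange it so that $\sum_{t}\gamma_t^2\bE[\|\bar\vy_t\|^2]$ ends up on the left-hand side. First I would record the two facts already established around \cref{eq3_sec2}: since $W$ is doubly stochastic, \cref{alg} gives the clean recursion $\bar\vx_{k+1}=\bar\vx_k-\gamma_k\bar\vy_k$, and $\bE[\bar\vy_k\mid\cF_{k-1}]=\eta\vg_k$ with $\vg_k=\frac1n\sum_{i}\nabla f_i(\vx_{i,k})$; note also $\bar\vx_k$ is $\cF_{k-1}$-measurable. Since $f=\sum_i f_i$ is Lipschitz smooth with constant $\sum_i L_i\le nL$ by \cref{assum1}(a), the descent lemma applied along $\bar\vx_{k+1}-\bar\vx_k=-\gamma_k\bar\vy_k$, followed by $\bE[\cdot\mid\cF_{k-1}]$ and $\bE[\bar\vy_k\mid\cF_{k-1}]=\eta\vg_k$, gives
\[
\bE[f(\bar\vx_{k+1})\mid\cF_{k-1}]\le f(\bar\vx_k)-\gamma_k\eta\langle\nabla f(\bar\vx_k),\vg_k\rangle+\tfrac{nL\gamma_k^2}{2}\bE[\|\bar\vy_k\|^2\mid\cF_{k-1}].
\]

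The heart of the argument is to convert both the cross term and the second moment into $\bE[\|\bar\vy_k\|^2\mid\cF_{k-1}]$. Writing $\nabla f(\bar\vx_k)=n\vg_k-\vr_k$ with $\vr_k=\sum_i(\nabla f_i(\vx_{i,k})-\nabla f_i(\bar\vx_k))$ and using $\|\vr_k\|^2\le nL^2\|X_k-\bone\bar\vx_k^\T\|_\sF^2$ (Cauchy--Schwarz and \cref{assum1}(a)), Young's inequality gives $\langle\nabla f(\bar\vx_k),\vg_k\rangle\ge\frac n2\|\vg_k\|^2-\frac{L^2}{2}\|X_k-\bone\bar\vx_k^\T\|_\sF^2$. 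For the second moment, because the mini-batches are drawn independently across nodes the vectors $\partial f_i(\vx_{i,k};\xi_{i,k})-\eta\nabla f_i(\vx_{i,k})$ are zero-mean and independent given $\cF_{k-1}$, so $\bE[\|\bar\vy_k\|^2\mid\cF_{k-1}]=\eta^2\|\vg_k\|^2+\frac1{n^2}\sum_i\bE[\|\partial f_i(\vx_{i,k};\xi_{i,k})-\eta\nabla f_i(\vx_{i,k})\|^2\mid\cF_{k-1}]$, which by \cref{assum1}(c) is at most $\eta^2\|\vg_k\|^2+\frac{\sigma_s^2}{n^2}+\frac{\lambda^2\eta^2}{n^2}\sum_i\|\nabla f(\vx_{i,k})\|^2$; bounding $\sum_i\|\nabla f(\vx_{i,k})\|^2\le 2n\|\nabla f(\bar\vx_k)\|^2+2n^2L^2\|X_k-\bone\bar\vx_k^\T\|_\sF^2$ via $nL$-smoothness keeps only the $\gamma_t^2\bE[\|\nabla f(\bar\vx_t)\|^2]$ and $\gamma_t\bE[\|X_t-\bone\bar\vx_t^\T\|_\sF^2]$ contributions that \cref{eq4_s5} tolerates. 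Substituting these two estimates into the descent inequality, replacing $\eta^2\|\vg_k\|^2=\bE[\|\bar\vy_k\|^2\mid\cF_{k-1}]-(\text{the variance part})$, and collecting the coefficient of $\bE[\|\bar\vy_k\|^2\mid\cF_{k-1}]$, the stepsize bound $\gamma_k\le\gamma$ from \cref{lemma1} makes that coefficient a negative multiple of $\gamma_k(4-3\eta\tilde L\gamma)$ — this is exactly where $\tilde L=L\sqrt{1+n^2\lambda^2}$ and the factor $4-3\eta\tilde L\gamma$ appear, the passage from $L$ to $\tilde L$ being forced by routing the gradient-dependent part of the variance back through smoothness into that coefficient. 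Moving it to the left and multiplying through by $\gamma_k$ (which also demotes the residual $\gamma_k^2$ consensus contributions to order $\gamma_k$ via $\gamma_k\le\gamma$) yields a per-iteration bound of the form $\gamma_k^2\bE[\|\bar\vy_k\|^2\mid\cF_{k-1}]\le\frac{4\eta\gamma_k}{n(4-3\eta\tilde L\gamma)}\big(f(\bar\vx_k)-\bE[f(\bar\vx_{k+1})\mid\cF_{k-1}]\big)+(\text{consensus}+\text{gradient}+\text{variance terms})$.

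It then remains to take total expectations and sum over $t=1,\dots,k$. The function-value differences telescope after noticing that the coefficients $c_t=\frac{4\eta\gamma_t}{n(4-3\eta\tilde L\gamma)}$ are non-increasing and $\bE[f(\bar\vx_t)]\ge f^\star$, so by an Abel-summation rearrangement $\sum_t c_t\big(\bE[f(\bar\vx_t)]-\bE[f(\bar\vx_{t+1})]\big)\le c_1\big(f(\bar\vx_1)-f^\star\big)$, which is the first term on the right of \cref{eq4_s5}; the consensus, gradient and variance parts add up directly to the other three terms. The only real difficulty I anticipate is the constant bookkeeping in the middle paragraph: choosing the Young's-inequality weights and the precise way $\gamma_k\le\gamma$ enters so that (i) the various $\|\vg_k\|^2$ (equivalently $\bE[\|\bar\vy_k\|^2\mid\cF_{k-1}]$) coefficients collapse into exactly $4-3\eta\tilde L\gamma$ with $\tilde L$ in place of $L$, and (ii) the coefficients of $\sum_t\gamma_t\bE[\|X_t-\bone\bar\vx_t^\T\|_\sF^2]$, $\sum_t\gamma_t^2\bE[\|\nabla f(\bar\vx_t)\|^2]$ and $\sum_t\gamma_t^2$ come out no larger than $\frac{4\eta\tilde L}{n(4-3\eta\tilde L\gamma)}$, $\frac{4\lambda^2\eta^2}{4-3\eta\tilde L\gamma}$ and $\frac{4\sigma_s^2}{n^2(4-3\eta\tilde L\gamma)}$ respectively.
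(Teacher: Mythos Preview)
Your plan is essentially the paper's proof: descent lemma for $f$ along $\bar\vx_{k+1}=\bar\vx_k-\gamma_k\bar\vy_k$, split the cross term into a consensus part and a part that yields $\bE[\|\bar\vy_k\|^2]$, bound the variance via \cref{assum1}(c) and independence across nodes, multiply by $\gamma_k$, telescope using the monotonicity of $\{\gamma_k\}$, and divide by the resulting coefficient $4-3\eta\tilde L\gamma$. The variance estimate and the Abel-summation telescoping are handled exactly as in the paper.

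The one tactical difference worth flagging is in how the cross term is converted into $\|\bar\vy_k\|^2$. You take conditional expectation first and then apply Young to $\langle\nabla f(\bar\vx_k),\vg_k\rangle$, which produces $\tfrac{n}{2}\|\vg_k\|^2$ and hence, after the substitution $\eta^2\|\vg_k\|^2=\bE[\|\bar\vy_k\|^2\mid\cF_{k-1}]-V_k$, only a $-\tfrac{n\gamma_k}{2\eta}\bE[\|\bar\vy_k\|^2]$ contribution. The paper instead keeps $\bar\vy_k$ in the cross term and uses the identity
\[
-\gamma_k n\,\bar\vy_k^\T\vg_k=-\tfrac{\gamma_k n}{\eta}\,\bar\vy_k^\T(\eta\vg_k-\bar\vy_k)-\tfrac{\gamma_k n}{\eta}\,\|\bar\vy_k\|^2,
\]
so a full $-\tfrac{n\gamma_k}{\eta}\|\bar\vy_k\|^2$ appears without any Young loss, and Young is applied with $\bar\vy_k$ (not $\vg_k$) to the residual $-\gamma_k\bar\vy_k^\T(\nabla f(\bar\vx_k)-n\vg_k)$, contributing the small extra $\tfrac{n\tilde L\gamma_k^2}{4}\|\bar\vy_k\|^2$. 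This is precisely what makes the coefficient collapse to $-\tfrac{n\gamma_k}{4\eta}(4-3\eta\tilde L\gamma_k)$ and, after combining the two consensus contributions $\tfrac{L^2}{\tilde L}+\tfrac{n^2\lambda^2 L^2}{\tilde L}=\tilde L$, gives the clean $\tilde L$ in front of $\sum_t\gamma_t\bE[\|X_t-\bone\bar\vx_t^\T\|_\sF^2]$. Your route would still yield a valid inequality of the same form, but with a denominator like $2-2\eta L\gamma$ rather than $4-3\eta\tilde L\gamma$; if you want the constants to match the statement exactly, adopt the paper's ``decompose before expectation and apply Young with $\bar\vy_k$'' step instead of passing to $\vg_k$ first.
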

\begin{proof}
	Recall that $f$ is Lipschitz smooth with parameter $nL$. We have
\bea\label{eq1_lemma4}
&f(\bar\vx_{k+1})\\
&=f(\bar\vx_k-\gamma_k\bar\vy_k)\leq f(\bar\vx_k)-\gamma_k\bar\vy_k^\T\nabla f(\bar\vx_k)+\frac{nL\gamma_k^2}{2}\|\bar\vy_k\|^2\\
&\leq f(\bar\vx_k)-\gamma_k \bar\vy_k^\T(\nabla f(\bar\vx_k)-\nabla F(X_k)^\T\bone)-\gamma_k n\bar\vy_k^\T\vg_k+\frac{nL\gamma_k^2}{2}\|\bar\vy_k\|^2\\
&\leq f(\bar\vx_k)-\gamma_k \bar\vy_k^\T(\nabla F(\bone\vx_k^\T)-\nabla F(X_k))^\T\bone-\frac{\gamma_k n}{\eta}\bar\vy_k^\T(\eta\vg_k-\bar\vy_k)-n(\frac{\gamma_k}{\eta}-\frac{L\gamma_k^2}{2})\|\bar\vy_k\|^2
\ena
We first bound $-\gamma_k \bar\vy_k^\T(\nabla F(\bone\vx_k^\T)-\nabla F(X_k))^\T\bone$. It holds that
\bea\label{eq2_lemma4}
-\gamma_k \bar\vy_k^\T(\nabla F(\bone\vx_k^\T)-\nabla F(X_k))^\T\bone&\leq\frac{n\tilde L\gamma_k^2}{4}\|\bar\vy_k\|^2+\frac{1}{\tilde L}\|\nabla F(\bone\vx_k^\T)-\nabla F(X_k)\|_\sF^2\\
&\leq\frac{n\tilde L\gamma_k^2}{4}\|\bar\vy_k\|^2+\frac{L^2}{\tilde L}\|X_k-\bone\bar\vx_k^\T\|_\sF^2
\ena
where we used the Lipschitz smoothness and the relation $2\va^\T\vb\leq c\|\va\|^2+\frac{1}{c}\|\vb\|^2$ (set $c=n\tilde L/2$).

Then, we bound $\bE[-\bar\vy_k^\T(\eta\vg_k-\bar\vy_k)|\cF_{k-1}]$. Recall that $\bE[\bar\vy_k|\cF_{k-1}]=\eta\vg_k$. We have
\bea\label{eq3_lemma4}
&\bE[-\bar\vy_k^\T(\eta\vg_k-\bar\vy_k)|\cF_{k-1}]=\bE[(\eta\vg_k-\bar\vy_k)^\T(\eta\vg_k-\bar\vy_k)|\cF_{k-1}]\\
&=\frac{1}{n^2}\bE[\|\partial f_1(\vx_{1,k};\xi_{1,k})-\eta\nabla f_1(\vx_{1,k})+\ldots+\partial f_n(\vx_{n,k};\xi_{n,k})-\eta\nabla f_n(\vx_{n,k})\|^2|\cF_{k-1}]\\
&=\frac{1}{n^2}\sum_{i=1}^n\bE[\|\partial f_i(\vx_{i,k};\xi_{i,k})-\eta\nabla f_i(\vx_{i,k})\|^2|\cF_{k-1}]\leq\frac{\sigma_s^2}{n^2}+\frac{\lambda^2}{n^2}\|\eta\nabla \vf(X_k)\|_\sF^2\\
&\leq\frac{\sigma_s^2}{n^2}+\frac{\lambda^2\eta^2}{n^2}\big(\|\nabla \vf(X_k)-\nabla\vf(\bone\bar\vx_k^\T)\|_\sF+\|\nabla\vf(\bone\bar\vx_k^\T)\|_\sF\big)^2\\
&\leq\frac{\sigma_s^2}{n^2}+2\lambda^2\eta^2L^2\|X_k-\bone\vx_k^\T\|_\sF^2+\frac{2\lambda^2\eta^2}{n}\|\nabla f(\bar\vx_k)\|^2\\
&\leq\frac{\sigma_s^2}{n^2}+\frac{n\lambda^2 L^2\eta}{\gamma_k\tilde L}\|X_k-\bone\vx_k^\T\|_\sF^2+\lambda^2\eta^2\|\nabla f(\bar\vx_k)\|^2
\ena
where we used the fact that $\xi_{i,k}$ and $\xi_{j,k}$ are independent if $i\neq j$ to obtain the last equality. The last inequality follows from  $\eta\gamma_k\tilde L\leq 1$ and $n\geq 2$ (when $n=1$ we have $X_k=\bone\bar\vx_k^\T$ and hence the inequality trivially holds).

Plug \cref{eq2_lemma4} and \cref{eq3_lemma4} into \cref{eq1_lemma4} and take the full expectations on both sides of it. We obtain
\bea
\bE[f(\bar\vx_{k+1})]-f^\star&\leq\bE[f(\bar\vx_k)]-f^\star-n\gamma_k(\frac{1}{\eta}-\frac{3\tilde L\gamma_k}{4})\bE[\|\bar\vy_k\|^2]\\
&\quad+\tilde L\bE[\|X_k-\bone\vx_k^\T\|_\sF^2]+\frac{\gamma_k\sigma_s^2}{\eta n}+n\lambda^2\eta\gamma_k\bE[\|\nabla f(\bar\vx_k)\|^2]
\ena
Multiply both sides by $\gamma_k$. We obtain
\bea\label{eq4_lemma4}
\gamma_k(\bE[f(\bar\vx_{k+1})]-f^\star)&\leq\gamma_k(\bE[f(\bar\vx_k)]-f^\star)-n\gamma_k^2(\frac{1}{\eta}-\frac{3\tilde L\gamma_k}{4})\bE[\|\bar\vy_k\|^2]\\
&\quad+\tilde L\gamma_k\bE[\|X_k-\bone\vx_k^\T\|_\sF^2]+\frac{\gamma_k^2\sigma_s^2}{\eta n}+n\lambda^2\eta\gamma_k^2\bE[\|\nabla f(\bar\vx_k)\|^2]
\ena
Summing up \cref{eq4_lemma4} over $k=1,2,\ldots$, and using that $\gamma_k$ is non-increasing, we have
\bea\label{eq5_s4}
\sum_{t=1}^k\gamma_t^2(4-3\eta \tilde L\gamma_t)\bE[\|\bar\vy_t\|^2]&\leq\frac{4\eta\gamma(\bE[f(\bar\vx_1)]-f^\star)}{n}+\frac{4\eta \tilde L}{n}\sum_{t=1}^k\gamma_t\bE[\|X_t-\bone\vx_t^\T\|_\sF^2]\\
&\quad+{4\lambda^2\eta^2}\sum_{t=1}^k\gamma_t^2\bE[\|\nabla f(\bar\vx_t)\|^2]+\frac{4\sigma_s^2}{n^2}\sum_{t=1}^k\gamma_t^2.
\ena
The desired result follows by dividing both sides by $4-3\eta \tilde L\gamma$.
\end{proof}

\subsection{Proof of \cref{theo1}}\label{sec42}

It follows from the Lipschitz smoothness of $f$ (with parameter $nL$) that
$
f(\bar\vx_{k+1})\leq f(\bar\vx_{k})-\gamma_k\bar\vy_k^\T\nabla f(\bar\vx_{k})+\frac{nL\gamma_k^2}{2}\|\bar\vy_k\|^2,
$
which is followed by
\bee\label{eq12_s5}
\bE[f(\bar\vx_{k+1})]\leq \bE[f(\bar\vx_{k})]-\frac{\gamma_k\eta}{n}\bE[n\vg_k^\T\nabla f(\bar\vx_{k})]+\frac{nL\gamma_k^2}{2}\bE[\|\bar\vy_k\|]^2.
\ene

Notice that
\bea\label{eq13_s5}
-n\vg_k^\T\nabla f(\bar\vx_{k})&=(\nabla f(\bar\vx_k)-n\vg_k)^\T\nabla f(\bar\vx_k)-\|\nabla f(\bar\vx_{k})\|^2\\
% &\leq-\|\nabla f(\bar\vx_{k})\|^2+\|\nabla f(\bar\vx_{k})-n\vg_k\|_\sF\|\nabla f(\bar\vx_{k})\|\\
&\leq-\|\nabla f(\bar\vx_{k})\|^2+\sqrt{n}L\|X_k-\bone\bar\vx_{k}^\sT\|_\sF\|\nabla f(\bar\vx_{k})\|\\
&\leq-\frac{1}{2}\|\nabla f(\bar\vx_{k})\|^2+\frac{nL^2}{2}\|X_k-\bone\bar\vx_{k}^\sT\|_\sF^2
\ena
where we used $\nabla f(\bar\vx_k)=\nabla F(\bone\bar\vx_k^\T)^\T\bone$. This inequality combined with \cref{eq12_s5} yields
\bee
\bE[f(\bar\vx_{k+1})]\leq \bE[f(\bar\vx_{k})]-\frac{\gamma_k\eta}{2n}\bE[\|\nabla f(\bar\vx_{k})\|^2]+\frac{nL\gamma_k^2}{2}\bE[\|\bar\vy_k\|]^2+\frac{\gamma_k\eta L^2}{2}\bE[\|X_k-\bone\bar\vx_{k}^\sT\|_\sF^2].
\ene

Summing this inequality over $k=1,2\ldots,K$ gives
\bea\label{eq14_s5}
&\sum_{t=1}^{K}\Big(\gamma_t\bE[\|\nabla f(\bar\vx_{t})\|^2]+nL^2\gamma_t\bE[\|X_t-\bone\bar\vx_{t}^\sT\|_\sF^2]\Big)\\
&\leq \frac{2n(f(\bar\vx_{1})-f^\star)}{\eta}+\underbrace{\frac{Ln^2}{\eta}\sum_{t=1}^{K}\gamma_t^2\bE[\|\bar\vy_t\|]^2+2nL^2\sum_{t=1}^{K}\gamma_t\bE[\|X_t-\bone\bar\vx_{t}^\sT\|_\sF^2]}_{\textstyle\psi}
\ena

The rest of the proof is devoted to bound $\psi$ in \cref{eq14_s5}. This is achieved by explicitly solving a LP problem, where we treat $\sum_{t=1}^{K}\gamma_t\bE[\|X_t-\bone\bar\vx_{t}^\sT\|_\sF^2]$ and $\sum_{t=1}^{K}\gamma_t^2\bE[\|\bar\vy_t\|]^2$ as the two decision variables that must satisfy two linear constraints given by \cref{lemma3} and \cref{lemma4}.

More specifically, we have 
\bee\label{eq_psi}
\psi\leq p^\star
\ene
where $p^\star$ is the optimal value of the following linear programming problem (LP):
\bea\label{linear}
&\maximize_{\tau_1>0,\tau_2>0}&&c_1\tau_1+c_2\tau_2\\
&\st &&\tau_1\leq a_1\tau_2+b_1,\quad \tau_2\leq a_2\tau_1+b_2
\ena
where $c_1,c_2,a_1,a_2,b_1,b_2$ are positive coefficients inheriting from $\psi$, \cref{eq7_s4} and \cref{eq4_s5}, and are defined as follows:
\bea\label{coefficients}
c_1&\triangleq2nL^2,\ c_2\triangleq\frac{Ln^2}{\eta},\ a_1\triangleq\frac{C_1\gamma_t^3}{(1-\theta)^2}=\frac{4n\rho^2(\eta\tilde L)^2\gamma_t^3}{(1-\theta)^2(1-\rho^2)^2},\ a_2\triangleq\frac{4\eta\tilde L}{n(4-3\eta \tilde L\gamma)},\\ 
b_1&\triangleq\frac{8\rho^2\sigma_s^2}{(1-\theta)^2(1-\rho^2)}\sum_{t=1}^K\gamma_t^3+\frac{12n\rho^2(\eta\lambda)^2}{(1-\theta)^2(1-\rho^2)}\sum_{t=1}^K\gamma_t^3\bE[\|\nabla f(\bar\vx_t)\|^2]+\tilde C_0,\\
b_2&\triangleq\frac{4\lambda^2\eta^2}{4-3\eta \tilde L\gamma}\sum_{t=1}^K\gamma_t^2\bE[\|\nabla f(\bar\vx_t)\|^2]+\frac{4\sigma_s^2}{n^2(4-3\eta \tilde L\gamma)}\sum_{t=1}^K\gamma_t^2+\frac{4\eta\gamma(f(\bar\vx_1)-f^\star)}{n(4-3\eta \tilde L\gamma)}.
\ena
In fact, the first constraint is from \cref{lemma3} and the second constraint is from \cref{lemma4}. Since \cref{linear} is a simple LP with only two variables, it can be readily checked (e.g. using graphical methods) that if $a_1a_2<1$, then the optimal solution is the point where both constraints are active. That is, $p^\star=c_1\tau_1^\star+c_2\tau_2^\star$, where $\tau_1^\star= a_1\tau_2^\star+b_1$ and $\tau_2^\star= a_2\tau_1^\star+b_2$. Therefore, we have
\bea\label{eq_pstar1}
p^\star&=[c_1,c_2]\begin{bmatrix}
	1 & -a_1\\
	-a_2 & 1
\end{bmatrix}^{-1}
\begin{bmatrix}
	b_1\\
	b_2
\end{bmatrix}=\frac{1}{1-a_1a_2}
[c_1,c_2]\begin{bmatrix}
	1 & a_1\\
	a_2 & 1
\end{bmatrix}
\begin{bmatrix}
	b_1\\
	b_2
\end{bmatrix}\\
&=\frac{c_1b_1+c_2a_2b_1+c_1a_1b_2+c_2b_2}{1-a_1a_2}.
\ena 

We now show that $a_1a_2<1$ is indeed true. In fact, we have the following relations which will be used in subsequent proofs.
\bee\label{eq_misc}
\frac{1}{1-\theta}\leq\frac{1+\rho}{1-\sqrt{\rho}},\  4-3\eta \tilde L \gamma\geq(1+\rho)^2,\ a_1a_2\leq\frac{1}{4},\forall \rho\in[0,1).
\ene
The first relation follows from \cref{lemma2} that $\theta\leq\frac{\sqrt{\rho}+\rho}{1+\rho}$. The second inequality follows from $\eta\tilde L\gamma\leq\frac{(1-\rho)^2}{(1+\rho)^2}$. The last one is from
\bee
a_1a_2=\frac{16\rho^2(\eta\tilde L\gamma)^3}{(1-\theta)^2(1-\rho^2)^2(4-3\eta\tilde\gamma)}\leq\frac{16\rho^2(1-\rho)^2(1+\sqrt{\rho})^2}{(1+\rho)^8}\leq\frac{32\rho^2(1-\rho)^2}{(1+\rho)^7}<\frac{1}{4}
\ene
where we used $\eta\tilde L\gamma\leq\frac{(1-\rho)^2}{(1+\rho)^2}$, $\frac{1}{1-\theta}\leq\frac{1+\rho}{1-\sqrt{\rho}}$ and $4-3\eta \tilde L \gamma\geq(1+\rho)^2$ to obtain the first inequality. The second inequality used the relation $(1+\sqrt{\rho})^2\leq2(1+\rho)$, and the last inequality can be readily checked by e.g. finding the maximum on $\rho\in(0,1)$.

Then, we have from \cref{coefficients} and \cref{eq_misc} that
\bea
&c_1b_1+c_2a_2b_1=(c_1+c_2a_2)b_1\\
&\leq6nL\tilde L(1+\rho)\Big(\frac{8\rho^2\sigma_s^2\sum_{t=1}^K\gamma_t^3+12n(\rho\eta\lambda)^2\sum_{t=1}^K\gamma_t^3\bE[\|\nabla f(\bar\vx_t)\|^2]}{(1-\rho^2)(1-\sqrt{\rho})^2}+\frac{\tilde C_0}{(1+\rho)^2}\Big)
\ena
where we used $2+(1+\rho)^2\leq3(1+\rho)$. Moreover,
\bea
&c_1a_1b_2+c_2b_2\leq\frac{Ln^2b_1}{\eta}\big(\frac{(1+\rho)^2}{8}+1\big)\\
&\leq\frac{9Ln}{2\eta}\Big(n\lambda^2\eta^2\sum_{t=1}^K\gamma_t^2\bE[\|\nabla f(\bar\vx_t)\|^2]+\frac{\sigma_s^2}{n}\sum_{t=1}^K\gamma_t^2+{\eta\gamma(f(\bar\vx_1)-f^\star)}\Big)
\ena
where the first inequality follows from $\frac{c_1a_1}{c_2}\leq \frac{a_1a_2(1+\rho)^2}{2}\leq\frac{(1+\rho)^2}{8}$.

Therefore, it follows from \cref{eq_pstar1} that
\bea\label{eq_pstar}
p^\star&\leq \frac{4}{3}\bigg(\Big(\frac{72\rho^2(\eta Ln^2\lambda^2\gamma)(\eta\tilde L\gamma)}{(1-\sqrt{\rho})^2(1-\rho)}+\frac{9\eta Ln^2\lambda^2\gamma}{2}\Big)\sum_{t=1}^K\gamma_t\bE[\|\nabla f(\bar\vx_t)\|^2]+\frac{9L\sigma_s^2}{2\eta}\sum_{t=1}^K\gamma_t^2\\
&\quad+\frac{48\rho^2(1+\sqrt{\rho})^2nL\tilde L\sigma_s^2}{(1-\rho)^3}\sum_{t=1}^K\gamma_t^3+\frac{9Ln\gamma(f(\bar\vx_1)-f^\star)}{2}+\frac{6nL\tilde L\tilde C_0}{1+\rho}\bigg)\\
&\leq 6Ln\gamma(f(\bar\vx_1)-f^\star)+8nL\tilde L\tilde C_0\\
&\quad+\frac{6L\sigma_s^2}{\eta}\sum_{t=1}^K\gamma_t^2+\frac{64\rho^2(1+\sqrt{\rho})^2nL\tilde L\sigma_s^2}{(1-\rho)^3}\sum_{t=1}^K\gamma_t^3+\frac{1}{3}\sum_{t=1}^K\gamma_t\bE[\|\nabla f(\bar\vx_t)\|^2]
\ena
where the last inequality follows from that 
\bee
\frac{72\rho^2(\eta Ln^2\lambda^2\gamma)(\eta\tilde L\gamma)}{(1-\sqrt{\rho})^2(1-\rho)}+\frac{9\eta Ln^2\lambda^2\gamma}{2}\leq\frac{72\rho^2(1-\rho)(1+\sqrt{\rho})^2+5(1+\rho)^2(1-\rho)^2}{24(1+\rho)^4}<\frac{1}{4}
\ene
where in the first inequality we used that $\eta Ln^2\lambda^2\gamma\leq\frac{(1-\rho)^2}{24(1+\rho)^2}$ and $\eta\tilde L \gamma\leq\frac{(1-\rho)^2}{(1+\rho)^2}$; the last inequality can be readily verified since it is equivalent to checking the positiveness of a polynomial on $\rho\in(0,1)$, and we omit the details.

Combine \cref{eq14_s5}, \cref{eq_psi}, and \cref{eq_pstar}, and subtract $\frac{1}{3}\sum_{t=1}^K\gamma_t\bE[\|\nabla f(\bar\vx_t)\|^2]$ from both sides. We obtain that
\bea
&\sum_{t=1}^{K}\Big(\gamma_t\bE[\|\nabla f(\bar\vx_{t})\|^2]+nL^2\gamma_t\bE[\|X_t-\bone\bar\vx_{t}^\sT\|_\sF^2]\Big)\\
&\leq \frac{12n(f(\bar\vx_{1})-f^\star)}{\eta}+\frac{9L\sigma_s^2}{\eta}\sum_{t=1}^K\gamma_t^2+\frac{96\rho^2(1+\sqrt{\rho})^2nL\tilde L\sigma_s^2}{(1-\rho)^3}\sum_{t=1}^K\gamma_t^3+12nL\tilde L\tilde C_0.
\ena
The desired result  follows by noticing that
\bee\label{eq_misc2}
\tilde C_0\leq\frac{2\sqrt{\rho}\gamma_1}{1-\sqrt{\rho}}\|X_1-\bone\bar\vx_1^\T\|_\sF^2+\frac{2\rho^2(1+\rho)\gamma_1^3}{(1-\sqrt{\rho})^3}\bE[\|Y_1-\bone\bar\vy_1^\T\|_\sF^2].
\ene

\subsection{Proof of \cref{theo2}}

Let $\tilde f_k\triangleq\sum_{i=1}^{n}\left[f_i(\vx_{i,k})+\nabla f_i(\vx_{i,k})^\T(\bar\vx_k-\vx_{i,k})\right]$, we have $\tilde f_k\leq f(\bar\vx_k)$ and
\bea\label{eq12_s4}
f(\vx)=\sum_{i=1}^{n}f_i(\vx)&\geq\sum_{i=1}^{n}\left[f_i(\vx_{i,k})+\nabla f_i(\vx_{i,k})^\T(\vx-\vx_{i,k})\right]=\tilde{f_k}+n\vg_{k}^\T(\vx-\bar\vx_{k})
\ena
due to the convexity. Let $e_k=\|\bar\vx_k-\vx^\star\|^2$. We have
\bea
e_{k+1}=\|\bar\vx_{k}-\vx^\star-\gamma_k\bar\vy_k\|^2=e_{k}-2\gamma_k\bar\vy_k^\T(\bar\vx_{k}-\vx^\star)+\gamma_k^2\|\bar\vy_k\|^2.
\ena
Since $\bE[\bar\vy_k^\T(\bar\vx_k-\vx^\star)|\cF_{k-1}]=\eta\bar\vg_k^\T(\bar\vx_k-\vx^\star)$, we have
\bea\label{eq3_s5}
\bE[e_{k+1}]&=\bE[e_{k}]-2\gamma_k \eta\bE[\vg_k^\T(\bar\vx_{k}-\vx^\star)]+\gamma_k^2\bE[\|\bar\vy_k\|^2]\\
&\leq \bE[e_{k}]+\frac{2\gamma_k\eta}{n}(f^\star-\bE[\tilde{f_k}])+\gamma_k^2\bE[\|\bar\vy_k\|^2]
\ena
where the inequality is from \cref{eq12_s4}. Then, we have
\bea
&\frac{2\eta}{n}\sum_{t=1}^k\gamma_t\Big(\bE[f(\bar\vx_{t})]-f^\star+\frac{L}{2}\|X_t-\bone\bar\vx_t^\T\|_\sF^2\Big)\\
&\leq\frac{2\eta}{n}\sum_{t=1}^k\gamma_t(\bE[\tilde f_t]-f^\star)+\frac{2\eta  L }{n}\sum_{t=1}^{k}\gamma_t\bE[\|X_t-\bone\bar\vx_t^\T\|_\sF^2]\\
&\leq \bE[e_{1}]-\bE[e_{k+1}]+\frac{\eta}{Ln^2}\Big(\frac{Ln^2}{\eta}\sum_{t=1}^{k}\gamma_t^2\bE[\|\bar\vy_{t}\|^2]+2nL^2\sum_{t=1}^{k}\gamma_t\bE[\|X_t-\bone\bar\vx_t^\T\|_\sF^2]\Big)\\
\ena
where the first inequality follows from $f(\bar\vx_k)\leq\tilde f_k+\frac{L}{2}\|X_k-\bone\bar\vx_k^\T\|_\sF^2$ due to the Lipschitz smoothness, and the last inequality follows from \cref{eq3_s5}. Note that the last term is identical to  the $\psi$ defined in \cref{eq14_s5}. In view of \cref{eq_psi} and \cref{eq_pstar}, we obtain 
\bea
&\sum_{t=1}^k\gamma_t\Big(\bE[f(\bar\vx_{t})]-f^\star+\frac{L}{2}\|X_t-\bone\bar\vx_t^\T\|_\sF^2\Big)\\
&\leq\frac{n(\bE[e_{1}]-\bE[e_{k+1}])}{2\eta}+3\gamma(f(\bar\vx_1)-f^\star)+4\tilde L\tilde C_0\\
&\quad+\frac{3\sigma_s^2}{\eta n}\sum_{t=1}^K\gamma_t^2+\frac{32\rho^2(1+\sqrt{\rho})^2\tilde L\sigma_s^2}{(1-\rho)^3}\sum_{t=1}^K\gamma_t^3+\frac{1}{3}\sum_{t=1}^K\gamma_t(\bE[f(\bar\vx_{t})]-f^\star)\\
&\leq\frac{3ne_1}{\eta}+\frac{5\sigma_s^2}{\eta n}\sum_{t=1}^K\gamma_t^2+\frac{48\rho^2(1+\sqrt{\rho})^2\tilde L\sigma_s^2}{(1-\rho)^3}\sum_{t=1}^K\gamma_t^3+6\tilde L\tilde C_0
\ena
where we used $\|\nabla f(\vx)\|^2\leq 2nL(f(\vx)-f^\star)$ from the convexity and Lipschitz smoothness of $f$ to obtain the first inequality; the last inequality follows by subtracting $\frac{1}{3}\sum_{t=1}^K\gamma_t(\bE[f(\bar\vx_{t})]-f^\star)$ from both sides and then multiply both sides with $3/2$. We also used $f(\vx)-f^\star\leq \frac{nL}{2}\|\bar\vx-\vx^\star\|^2$. The desired result then follows in view of \cref{eq_misc2}.

\section{Discussion}\label{sec6}

In this section, we highlight several key observations from the theoretical results. We illustrate that the network topology and the weight matrix may not affect the convergence rate too much in some cases, and we discuss the relation between DSGT and the centralized SGD. Then, we show that a linear speedup is achievable under some assumptions that are implicitly assumed in existing works.

\subsection{Network independent convergence rate for small $\gamma_k$}
\cref{theo1} implies that the convergence rate of DSGT is independent of the algebraic connectivity $\rho$ for sufficiently small stepsize $\gamma_t$. Specifically, if the following condition holds:
\begin{assum}\label{A1}
$\{\gamma_k\}$ satisfies that $\frac{n\tilde L\rho^2}{(1-\rho)^3}\sum_{t=1}^k\gamma_t^3=O(\sum_{t=1}^k\gamma_t^2)$.
\end{assum}
Then, it follows from \cref{theo1} that the convergence rate of DSGT is
\bee\label{eq_sec4}
R(k)=O\Big(\frac{1+L\sigma_s^2\sum_{t=1}^k\gamma_t^2}{\sum_{t=1}^k\gamma_t}\Big)
\ene
where the constant factor is independent of $\rho$. We borrow the name from  \cite{pu2020asymptotic} to call this property network independence\footnote{\cite{pu2020asymptotic} mainly studies the asymptotic network independence property for D-SGD and focuses on the comparison with the centralized SGD.}. In the case of constant stepsize (\cref{coro2}), \cref{A1} holds when the number of iterations $K$ is large enough. For diminishing stepsizes (\cref{coro1}), this condition trivially holds if $\gamma_k=O(1/\sqrt{k})$.

The network independence property has also been observed for D-PSGD \cite{pu2020asymptotic,lian2017can,neglia2020decentralized} under different conditions. Note that this property for DSGT is previously shown only for strongly convex functions and diminishing stepsize $\gamma_k=O(1/k)$ \cite{pu2018distributed2}.

\subsection{Comparison with the centralized SGD}\label{subsection}
Consider the centralized mini-batch SGD with batch size $M>0$ and stepsize $\bar\gamma_k$, which has the following update rule 
\bee\label{csgd}
\vx_{k+1}=\vx_k-\frac{\bar\gamma_k}{M}\sum_{i=1}^M\nabla_x l(\vx_k;d_i).
\ene
where the $M$ samples are randomly sampled from \emph{global} dataset $\cD=\cup_{i=1}^{n}\cD_i$ with $|\cD|=N=\sum_{i=1}^n N_i$. To characterize its convergence rate, we must introduce the Lipschitz smoothness assumption for the \emph{global} objective function and the variance for \emph{global} SGs, i.e., there exist $L_c\geq0$ and $\sigma\geq0$ such that
\begin{equation}\label{assum_cen}
\begin{gathered}
\|\nabla f(\vx)-\nabla f(\vy)\|\leq L_c\|\vx-\vy\|,\forall \vx,\vy,\\
\bE_{d\sim\cD}[\big\|\nabla_x l(\vx;d)-\frac{1}{N}\sum\nolimits_{u=1}^N \nabla_x l(\vx;d_u)\big\|^2]\leq\sigma^2.
\end{gathered}
\end{equation}
Note that \cref{assum1} in this work is defined for \emph{local} objective functions and SGs. From \cite{lan2020first}, the convergence rate of \cref{csgd} is\footnote{The rate can be deduced from Theorem 6.1 in \cite{lan2020first}, in which the global cost function $\bar f$ is defined as $f$ in \cref{obj2} divided by the number of samples, i.e., $\bar f=\frac{1}{N}f$.} 
\bee\label{csgd2}
\frac{\sum_{t=1}^k \bar\gamma_t\bE[\|\nabla f(\vx_t)\|^2]}{N\sum_{t=1}^k \bar\gamma_t}\leq O\Big(\frac{f(\vx_1)-f^\star+L_c\sigma^2/M\sum_{t=1}^k\bar\gamma_t^2}{\sum_{t=1}^k \bar\gamma_t}\Big).
\ene

By \cref{theo1}, the convergence rate of DSGT when $\lambda=0$ is
\bee\label{rate_dsgt}
\frac{\sum_{t=1}^k \bar\gamma_t\bE\big[\|\nabla f(\vx_t)\|^2+nL^2\|X_t-\bone\bar\vx_{t}^\sT\|_\sF^2\big]}{N\sum_{t=1}^k \bar\gamma_t}\leq O\Big(\frac{f(\vx_1)-f^\star+nL\sigma_s^2/M^2\sum_{t=1}^k\bar\gamma_t^2}{\sum_{t=1}^k \bar\gamma_t}\Big)
\ene
where we used \cref{A1}, and set $\eta=\frac{M}{N}$ and stepsize $\gamma_k=n\bar\gamma_k/M$.

In order to compare \cref{csgd2} with \cref{rate_dsgt}, we must study the relations between $L_c,\sigma^2$ and $L,\sigma_s^2$. To this end, we make the following assumption:

\begin{assum}\label{A2}
There exists a $c_l>0$ such that $L_i=c_l N_i$ and $L_c=c_l N$, and $N_i=N_j,\forall i,j\in\cV$.
\end{assum}
\cref{A2} states that the Lipschitz constant of local objective function $f_i$ is proportional to the size of local datasets, which is reasonable since $f_i$ is the sum of $N_i$ functions. It generally holds for convex loss function $l(\vx;d)$. We also assume that local datasets have equal sizes, and hence $L_c=nL=nL_i,\forall i\in\cV$. We consider two scenarios:
\begin{enumerate}[leftmargin=18pt, labelsep=4pt, label=(\alph*)]
	\item Given $n$ local datasets and \cref{assum1}, what is the convergence rate if one applies the centralized SGD \cref{csgd} to the global dataset?
	
	$\quad$In this case, we have $L_c= nL$ under \cref{A2} and $\sigma^2\geq \sigma_s^2/M$ (see \cref{adx_6} for a proof). Substituting it into \cref{csgd2} and then comparing it with \cref{rate_dsgt} leads to an interesting observation --- DSGT can be faster than the centralized SGD. We use a toy example to illustrate it. 
	\begin{example}\label{exm2}
		Consider that each local dataset consists of two duplicated samples and different local datasets have different samples. Let the size of mini-batch be one.  In DSGT, a local SG is just the half of the local \emph{full} gradient, and hence DSGT has a convergence rate $O(1/k)$ (by setting $\sigma_s=0$ in \cref{coro2} or referring to \cite{qu2017harnessing}). In the centralized SGD, the SGs have positive variance ($\sigma>0$) since the global dataset has distinct samples, and hence the convergence rate is at most $O(1/\sqrt{k})$, which is slower than that of DSGT. Interestingly, we do not need \cref{A2} here.
	\end{example}
	
	\item Given a global dataset satisfying \cref{assum_cen}. If we partition it into $n$ local datasets, what is the convergence rate if one applies DSGT on these local datasets?
	
	$\quad$In this case, we need to bound $\sigma_s^2$ with $\sigma^2$, and select $\sigma_s^2\leq\min\{nM\sigma^2,2M\sigma^2+2\eta Lc_l\|X_k-\bone\bar\vx_k^\T\|^2/M\}$ in the $k$-th iteration (see \cref{adx_6} for a proof). Then,
	\bee\label{eq1_sec4}
	\frac{nL\bE[\sigma_s^2]}{M^2}\sum_{t=1}^k\bar\gamma_t^2\leq \frac{2L_c\sigma^2}{M}\sum_{t=1}^k\bar\gamma_t^2+\frac{2n L^2}{MN}\sum_{t=1}^k\bar\gamma_t\bE[\|X_t-\bone\bar\vx_t^\T\|^2
	\ene
	where we used $\bar\gamma_t\leq\frac{N}{L_c}=\frac{1}{c_l}$. Plug it into \cref{rate_dsgt} and notice that the second term in the right-hand-side of \cref{eq1_sec4} can be eliminated by subtracting it from both sides of \cref{rate_dsgt}. Then, we will obtain a rate that is identical to the rate \cref{csgd2} of the centralized SGD. 
\end{enumerate}

In both scenarios, DSGT is comparable or even better than the centralized SGD. Nevertheless, we emphasize that the above analysis relies  on \cref{A1} and \cref{A2}.

\subsection{Speedup analysis}\label{sec3.2}

Speedup is defined as $S=T_1/T_n$, where $T_n$ is the time cost to solve \cref{obj2} within a given degree of accuracy using $n$ workers with equal computational capabilities, which corresponds to the second scenario in the last subsection. Speedup is important to reflect the scalability of a decentralized algorithm \cite{lian2017can,hao2019linear} and is apparently bounded by $n$. Generally speaking, 
$T_n$ mainly consists of computation time, communication time, and synchronization time. We assume in this section that communication time and synchronization time are negligible since they are difficult to quantified in the current framework and vary with applications and physical devices. In some applications communications and synchronization can take much time, where the speedup may be largely reduced. One may consider to combine DSGT with communication compressed techniques \cite{lan2017communication,shen2018towards,koloskova2019decentralized} or asynchronous methods \cite{lian2018asynchronous,zhang2019asyspa,zhang2019asynchronous} in future works.

Let $\bar T_n$ be the time for $n$ workers to finish one iteration in DSGT, and $K_n$ be the number of iterations to obtain an $\epsilon$-optimal solution with $n$ workers, i.e., $\frac{1}{K_n}\sum_{t=1}^{K_n}\bE[\|\nabla f(\bar\vx_t)\|^2]\leq \epsilon$. Each node processes $\eta N_i$ samples at each iteration, and thus node $i$ needs $O(N_i)$ time to finish an iteration. Hence, $\bar T_n=O(\max_{i\in\cV} N_i)$ as DSGT is a synchronous algorithm. Moreover, \cref{coro2} implies that  $K_n=O(\frac{nL\sigma_s^2}{\epsilon^2}+\frac{n^2L\rho^2}{(1-\rho)^3\epsilon})$ if a constant stepsize is adopted and $\lambda=0$, where the constant factor is independent of $n,\rho,L,\sigma_s$. If \cref{A1} holds, the first term of $K_n$ will dominate. Then, the speedup can be represented in the following form
\bee
S= \frac{\bar T_1}{\bar T_n}\times\frac{K_1}{K_n}=\frac{\sum_{i=1}^n N_i}{\max_{i\in\cV}N_i}\times\frac{O(L_c)}{O(nL)}.
\ene

If \cref{A2} holds, we have $\sum_{i=1}^n N_i=n\max_{i\in\cV}N_i$ and  $L_c=nL$, and thus $S=O(n)$. A linear speedup is then achievable. It is worth noting that many existing works (e.g. D-PSGD\cite{lian2017can}, D$^2$\cite{tang2018d}, SGP\cite{assran2018stochastic}, \cite{hao2019linear}) implicitly assumed \cref{A1} and \cref{A2} to derive linear speedup properties.

If either \cref{A1} or \cref{A2} is not satisfied, then the speedup of DSGT may become sublinear, which might holds for any decentralized algorithm. We provide some intuitive explanations. Consider an example where $f_1(\vx)$ is a non-constant function and $f_i(\vx)=0$ for $i=2,\ldots,n$. We have $f(\vx)=\sum_{i=1}^{n}f_i(\vx)=f_1(\vx)$. Clearly, \cref{A2} does not hold. Since nodes $i$ ($i=2,\ldots,n$) compute local (stochastic) gradients, their existence do not help to minimize $f$. Thus, it seems impossible for any decentralized algorithm to have a linear speedup. Furthermore, if local datasets have distinct sizes, then nodes with smaller datasets have to wait for those with larger datasets at each iteration, which introduces idle time and may lead to a smaller $S$. \cref{A1} essentially states that the consensus speed is not the bottleneck of the global convergence rate. Otherwise, the design of weight matrix and network topology will significantly affect $S$.

Finally, it is worth noting that \cref{A1} is easy to satisfy for a smaller $\rho$, which corresponds to a more densely connected graph. We refer interested readers to \cite{pu2018distributed2} for the values of $\rho$ for several common graphs. However, a dense graph will introduce heavy communication overhead, which may slow down the convergence rate in wall-clock time. Thus, there is a trade-off in the design of communication topology. In practice, it is preferable to design a network with a given $\rho$ by as few edges as possible to reduce communication overhead, which is addressed in \cite{xiao2004fast}. To our best knowledge, most of existing decentralized methods (e.g. \cite{lian2017can,tang2018d,assran2018stochastic}) face such a trade-off and it is empirically observed in \cite{assran2018stochastic}.

\section{Experiment}\label{sec4}

We numerically evaluate the DSGT on two tasks and compare it with D-PSGD \cite{lian2017can} and D$^2$ \cite{tang2018d}. Note that SGP \cite{assran2018stochastic} reduces to D-PSGD for a symmetric weight matrix in undirected graphs. The first task is to train a logistic regression classifier on the CIFAR-10 dataset \cite{krizhevsky2009learning}, which is a convex problem. The second task is to train LeNet \cite{lecun1998gradient} on the CIFAR-10, which is a non-convex neural network training problem. 

We test over $n=1, 6, 12, 18, 24$ nodes\footnote{Each node is a physical core of Intel Xeon CPU E5-2660 v4 on a server.}, and each node is assigned a randomly partitioned subset with equal sizes. The sum of mini-batch sizes among nodes is 1024 and hence each node uses a batch size $1024/n$, which is for a fair comparison with the centralized SGD. We use a constant stepsize tuned for each algorithm and each task from the grid $[0.001n, 0.005n, 0.01n, 0.03n, 0.05n, 0.1n, 0.2n]$. The communication graph is constructed using FDLA\cite{xiao2004fast}. In particular, we first randomly generate a topology where each node has $2\log_2{n}$ neighbors on average, and determine the weight matrix $W$ by applying the FDLA method \cite{xiao2004fast} for DSGT and D-PSGD, and the Metropolis method for D$^2$. The weight matrix obtained by the FDLA method generally has smaller $\rho$ than Metropolis method, although it may result in the divergence of D$^2$ as explained in Section \ref{sec2.2}. The implementation is based on PyTorch 1.1 and OpenMPI 1.10.

\begin{figure}[!t]
	\vspace{-5pt}
	\centering
	\subfloat[Loss vs epochs over 12 nodes]{\label{fig_lra}\includegraphics[width=0.249\linewidth]{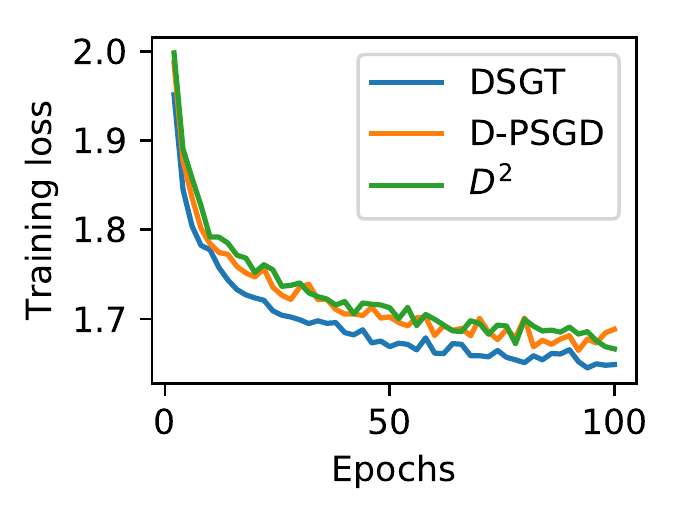}}
	\subfloat[Loss vs time over 12 nodes]{\label{fig_lrb}\includegraphics[width=0.249\linewidth]{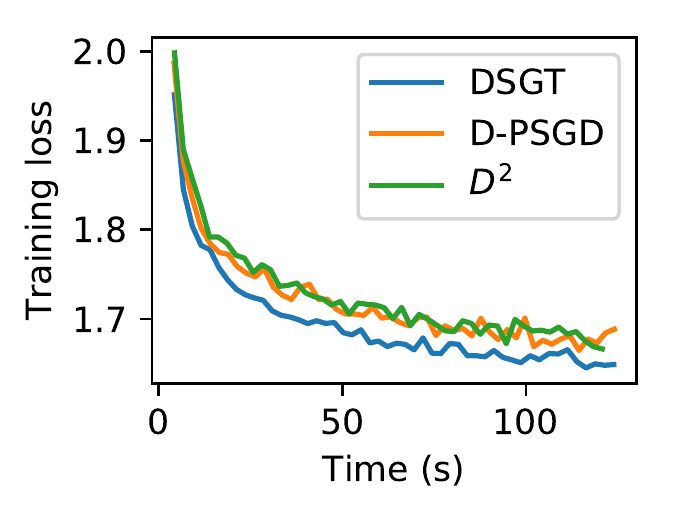}}
	\subfloat[Loss of DSGT]{\label{fig_lrc}\includegraphics[width=0.249\linewidth]{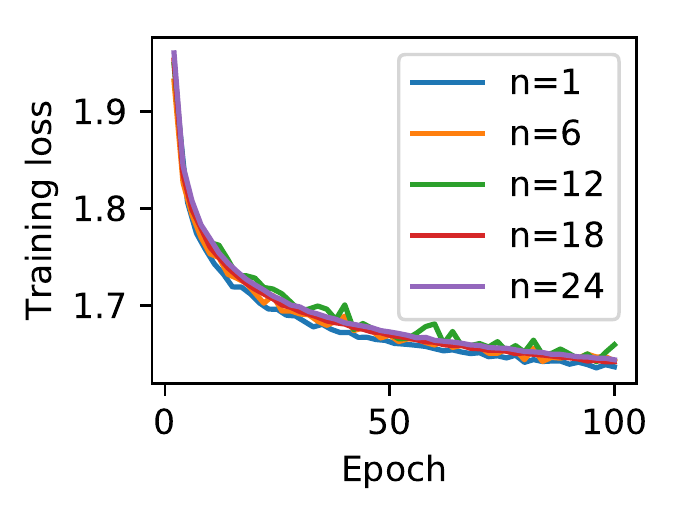}}
	\subfloat[Speedup in training time]{\label{fig_lrd}\includegraphics[width=0.249\linewidth]{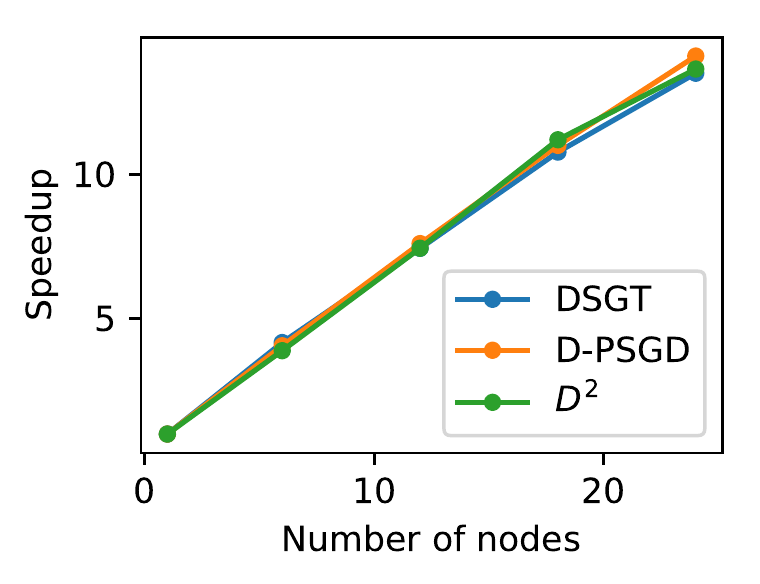}}
	\caption{Results on logistic regression task.}
	\label{fig_lr}
	\vspace{-5pt}
\end{figure}

\begin{figure}[!t]
	\vspace{-5pt}
	\centering
	\subfloat[Loss vs epochs over 12 nodes]{\label{fig_lenet1}\includegraphics[width=0.249\linewidth]{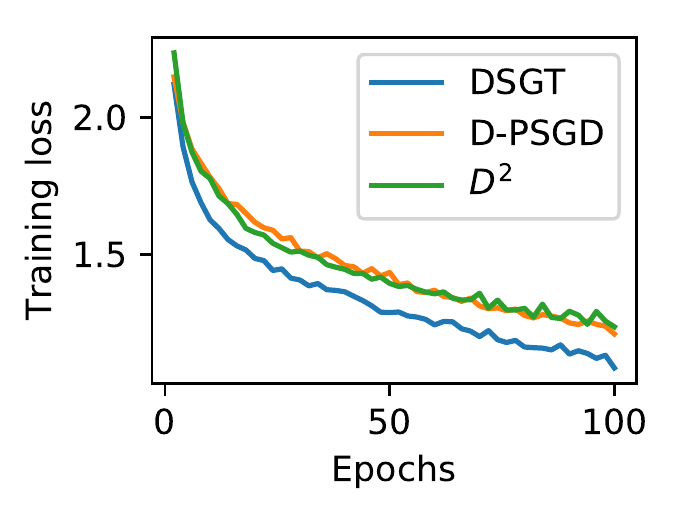}}
	\subfloat[Loss vs time over 12 nodes]{\label{fig_lenet2}\includegraphics[width=0.249\linewidth]{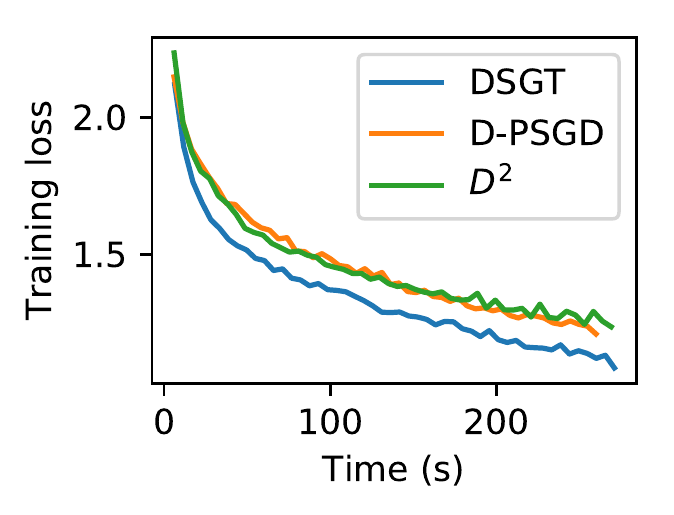}}
	\subfloat[Loss of DSGT]{\label{fig_lenet3}\includegraphics[width=0.249\linewidth]{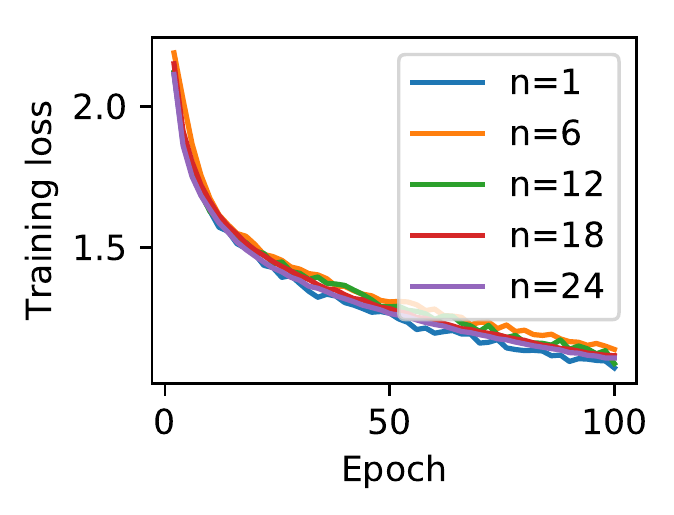}}
	\subfloat[Speedup in training time]{\label{fig_lenet4}\includegraphics[width=0.249\linewidth]{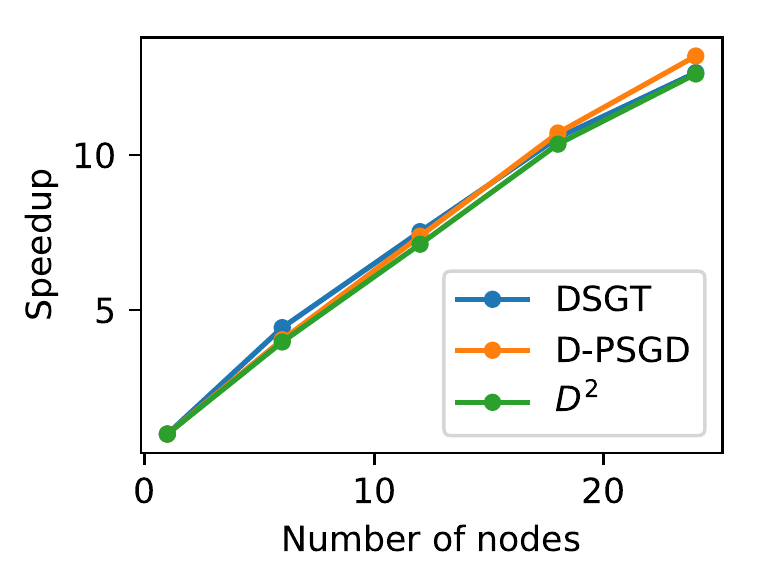}}
	\caption{Results on LeNet.}
	\label{fig_lenet}
	\vspace{-5pt}
\end{figure}

For the logistic regression task, Figs. \subref*{fig_lra} and \subref*{fig_lrb} depict the decreasing training loss w.r.t. number of epochs and the wall-clock time over 12 nodes, respectively. We can observe that DSGT has a faster convergence rate than both D-PSGD and D$^2$. The cases for other numbers of nodes are almost identical, and hence are omitted for saving space. Fig. \subref*{fig_lrc} plots the training loss of DSGT w.r.t. epochs over different number of nodes, which shows that the number of iterations required to achieve a certain accuracy is essentially not related to the number of nodes, which validates our results in Section \ref{sec3.2}. Fig. \subref*{fig_lrd} illustrates the linear speedup of DSGT in training time, which is also consistent with Section \ref{sec3.2}. The performance of DSGT on LeNet is similar as displayed in Fig. \ref{fig_lenet}.

We also compute the classification accuracy, which is the ratio of number of correct predictions to the total number of input samples, of the trained models on the test data and the results are presented in Fig. \ref{fig_acc}. It shows that DSGT has a higher accuracy than D-PSGD and D$^2$, which empirically validates the superiority of DSGT from a different point of view. The performance is similar for other numbers of nodes.

\begin{figure}[!t]
	\vspace{-5pt}
	\centering
	\subfloat[LeNet]{\label{fig_acca}\includegraphics[width=0.29\linewidth]{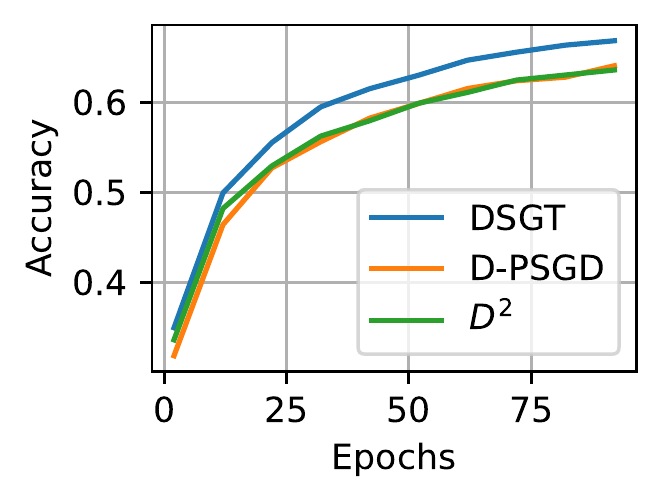}}\qquad
	\subfloat[Logistic regression]{\label{fig_accb}\includegraphics[width=0.3\linewidth]{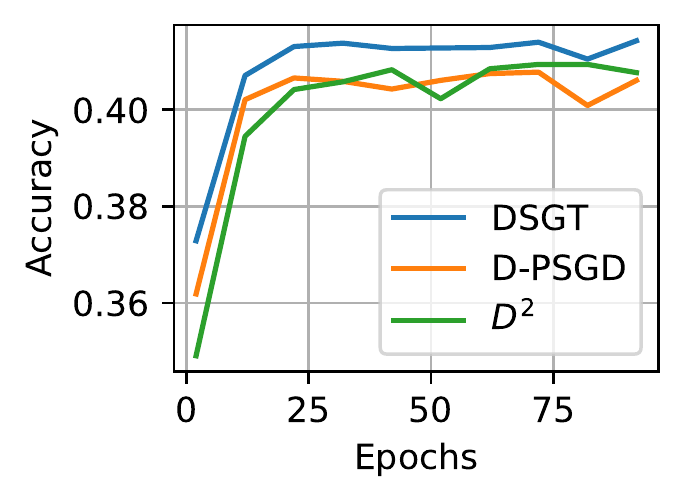}}
	\caption{Classification accuracy on the test data v.s. training epochs over 12 nodes.}
	\label{fig_acc}
	\vspace{-5pt}
\end{figure}

\section{Conclusion}\label{sec5}
We studied DSGT for non-convex empirical risk minimization problems. DSGT leverages the SG tracking method to handle decentralized datasets with different variances and sizes. We have proved the convergence rate of DSGT for non-convex functions with both constant and diminishing stepsizes. We show that DSGT has a so-called network independence property under certain conditions, and hence its convergence rate is comparable with the centralized SGD w.r.t. the number of iterations. Since DSGT with $n$ nodes can finish one iteration $n$ time faster than the centralized SGD (under some conditions), it can achieve a linear speedup in computation time w.r.t. the number of nodes under the same assumptions as existing works.  Experiments validate the efficiency of DSGT and show its advantages over existing algorithms. Future works shall focus on accelerating DSGT with momentum or variance reduction methods, and reduce communication and synchronization overhead with communication-efficient methods and asynchronous updates.

\section*{Acknowledgments}
The authors would like to thank Dr. Shi Pu from Chinese University of Hong Kong (Shenzhen) for his helpful suggestions on showing the network independence property of DSGT. We also appreciate the constructive comments from Associate Editor and anonymous reviewers, which greatly improved the quality of this work.

\appendix

\section{Proof of \cref{lemma1}}\label{adx0}

We define $J=\frac{1}{n}\bone\bone^\T$ for ease of presentation. Note that $\|W-J\|=\rho<1,\ \|I-J\|=1$
and $\|AB\|_\sF\leq\|A\|\|B\|_\sF,\forall A,B$, which will be frequently used in the subsequent proofs. To obtain \cref{eq1_s2}, it follows from \cref{alg} that
\bea
&\|X_{k+1}-\bone\bar\vx_{k+1}^\T\|_\sF^2=\|WX_k-\bone\bar\vx_{k}^\T+\gamma_k\bone\bar\vy_{k}^\T-\gamma_k WY_k\|_\sF^2\\
&\leq\|WX_k-\bone\bar\vx_k^\T\|_\sF^2-2\gamma_k(WX_k-\bone\bar\vx_k^\T)^\T(WY_k-\bone\vy_k^\T)+\gamma_k^2\|WY_k-\bone\vy_k^\T\|_\sF^2\\
\ena
and the second term in the right-hand-side can be bounded by
\bea
&-2(WX_k-\bone\bar\vx_k^\T)^\T(WY_k-\bone\vy_k^\T)\\
&\quad\leq\frac{1-\rho^2}{(1+\rho^2)\gamma_k}\rho^2\|X_k-\bone\bar\vx_k^\T\|_\sF^2+\frac{(1+\rho^2)\gamma_k}{1-\rho^2}\rho^2\|Y_k-\bone\vy_k^\T\|_\sF^2
\ena
where  we used $2\va^\T\vb\leq c\|\va\|^2+\frac{1}{c}\|\vb\|^2$ for any $c>0$ and $\|WX_k-\bone\bar\vx_k^\T\|_\sF=\|W-J)(X_k-\bone\bar\vx_k^\T)\|_\sF\leq\rho\|X_k-\bone\bar\vx_k^\T\|$. Combing the above relations implies that
\bea
\|X_{k+1}-\bone\bar\vx_{k+1}^\T\|_\sF^2\leq \frac{2\rho^2}{1+\rho^2}\|X_k-\bone\bar\vx_k^\T\|_\sF^2+\frac{2\rho^2\gamma_k^2}{1-\rho^2}\|Y_k-\bone\vy_k^\T\|_\sF^2.
\ena
Taking expectations on both sides implies the result.

We now turn to \cref{eq1_s1}. To this end, we let $\partial_k\triangleq\partial F(X_{k};\bxi_k)$ and $\nabla_k\triangleq\nabla F(X_k)$ for ease of presentation. It follows from \cref{alg} that
\bea\label{eq2_s1}
&\bE[\|Y_{k+1}-\bone\bar\vy_{k+1}^\T\|_\sF^2]=\bE[\|(W-J)Y_k+(I-J)(\partial_{k+1}-\partial_k)\|_\sF^2]\\
&=\bE[\|(W-J)Y_k+(I-J)(\partial_{k+1}-\eta\nabla_{k+1}-(\partial_k-\eta\nabla_k)+(\eta\nabla_{k+1}-\eta\nabla_k))\|_\sF^2]\\
&\leq\bE[\|WY_k-\bone\bar\vy_{k}^\T\|_\sF^2]+\underbrace{\bE[\|\partial_{k+1}-\eta\nabla_{k+1}-(\partial_k-\eta\nabla_k)\|_\sF^2]}_{(a)}+\eta^2\underbrace{\bE[\|\nabla_{k+1}-\nabla_k\|_\sF^2]}_{(b)}\\
&\quad+2\underbrace{\bE[Y_k^\T(W-J)(\partial_{k+1}-\eta\nabla_{k+1}-(\partial_k-\eta\nabla_k))]}_{(c)}+\underbrace{\bE[2Y_k^\T(W-J)(\eta\nabla_{k+1}-\eta\nabla_k)]}_{(d)}\\
&\quad+2\underbrace{\bE[(\partial_{k+1}-\eta\nabla_{k+1}-(\partial_k-\eta\nabla_k))(I-J)(\eta\nabla_{k+1}-\eta\nabla_k)]}_{(e)}
\ena

The rest of the proof is devoted to bound $(a),(b),(c),(d),(e)$. Let us first consider $(a)$. Let $\Sigma=[\sigma_1,\dots,\sigma_n]^\T\in\bR^{n}$ and $\nabla \vf(X_k)=[\nabla f(\vx_{1,k}),\dots,\nabla f(\vx_{n,k})]^\T\in\bR^{n\times m}$. We have that $\nabla f$ and hence $\nabla \vf$  are $(nL)$-Lipschitz smooth. Therefore,
\bea
\|\nabla \vf(X_{k})\|_\sF^2&\leq2\|\nabla \vf(X_{k})-\nabla \vf(\bone\bar\vx_k^\T)\|_\sF^2+2\|\nabla \vf(\bone\bar\vx_k^\T)\|_\sF^2\\
&\leq 2n^2L^2\|X_k-\bone\bar\vx_k^\T\|_\sF^2+2n\|\nabla f(\bar\vx_k)\|^2
\ena
and
\bea
&\|\nabla \vf(X_{k+1})\|_\sF^2=2\|\nabla \vf(WX_k-\gamma_k WY_k)-\nabla \vf(WX_k)\|_\sF^2+2\|\nabla \vf(WX_k)\|_\sF^2\\
&\leq 2\gamma_k^2 n^2L^2\|WY_k\|_\sF^2+4\|\nabla \vf(WX_k)-\nabla \vf(\bone\bar\vx_k)\|_\sF^2+4\|\nabla \vf(\bone\bar\vx_k)\|_\sF^2\\
&\leq 2\gamma_k^2 n^2L^2(\|WY_k-\bone\bar \vy_k^\T\|_\sF^2+n\|\bar\vy_k\|^2)+4n^2L^2\|WX_k-\bone\bar\vx_k^\T\|_\sF^2+4n\|\nabla f(\bar\vx_k)\|^2\\
&\leq 2(\gamma_k \rho nL)^2\|Y_k-\bone\bar \vy_k^\T\|_\sF^2+4(\rho nL)^2\|X_k-\bone\bar\vx_k^\T\|_\sF^2+4n\|\nabla f(\bar\vx_k)\|^2+2\gamma_k^2L^2n^3\|\bar\vy_k\|^2.
\ena
Then, it follows from \cref{assum1}(c) that 
\bea\label{eq6_s1}
(a)&=\bE[\|(\partial_{k+1}-\eta\nabla_{k+1})-(\partial_k-\eta\nabla_k)\|_\sF^2]=\bE[\|\partial_{k+1}-\eta\nabla_{k+1}\|_\sF^2]+\bE[\|\partial_k-\eta\nabla_k\|_\sF^2]\\
&\leq 2\|\Sigma\|^2+(\lambda\eta)^2\bE[\|\nabla \vf(X_k)\|_\sF^2]+(\lambda\eta)^2\bE[\|\nabla \vf(X_{k+1})\|_\sF^2]\\
&\leq 2\sigma_s^2+2(\lambda\eta nL)^2(1+\rho)^2\bE[\|X_k-\bone\bar\vx_k^\T\|_\sF^2]+2(\gamma_k\rho\lambda\eta nL)^2\bE[\|Y_k-\bone\bar \vy_k^\T\|_\sF^2]\\
&\quad+6(\lambda\eta)^2n\bE[\|\nabla f(\bar\vx_k)\|^2]+2(\gamma_k\lambda\eta L)^2n^3\bE[\|\bar\vy_k\|^2].
\ena
where we used that $\bE[(\partial_{k+1}-\eta\nabla_{k+1})^\T(\partial_k-\eta\nabla_k)]=\bE_{\cF_{k}}[\bE[(\partial_{k+1}-\eta\nabla_{k+1})^\T(\partial_k-\eta\nabla_k)|\cF_{k}]]=\bE_{\cF_{k}}[\bE[(\partial_{k+1}-\eta\nabla_{k+1})|\cF_{k}]^\T(\partial_k-\eta\nabla_k)]=0$ due to the independence between $\bxi_{k}$ and $\bxi_{k+1}$. 

We then turn to $(b)$. It holds that
\bea\label{eq5_s1}
(b)&=\bE[\|\nabla_{k+1}-\nabla_k\|_\sF^2]\leq L^2\bE[\|X_{k+1}-X_k\|_\sF^2]\\
&=L^2\bE[\|WX_{k}-X_k-\gamma_k WY_k\|_\sF^2]=L^2\bE[\|(W-I)(X_k-\bone\bar\vx_k^\T)-\gamma_k WY_k\|_\sF^2]\\
&=L^2\bE[\|(W-I)(X_k-\bone\bar\vx_k^\T)\|_\sF^2]+\gamma_k^2L^2\bE[\|WY_k\|_\sF^2]\\
&\quad-2\gamma_kL^2\bE[(X_k-\bone\bar\vx_k^\T)^\T(W-I)(W-J)Y_k]\\
&\leq2L^2(\|W-J+J-I\|^2\bE[\|X_k-\bone\bar\vx_k^\T\|_\sF^2]+\gamma_k^2L^2\bE[\|(W-J)Y_k\|_\sF^2]\\
&\quad+\gamma_k^2L^2\bE[\|(W-J)(Y_k-\bone\bar\vy_{k}^\T)+\bone\bar\vy_{k}^\T)\|_\sF^2])\\
&\leq2(1+\rho)^2L^2\bE[\|X_k-\bone\bar\vx_k^\T\|_\sF^2]+2\gamma_k^2\rho^2L^2\bE[\|Y_k-\bone\bar\vy_{k}^\T\|_\sF^2]+\gamma_k^2nL^2\bE[\|\bar\vy_{k}\|^2]
\ena
where we used $(W-I)J=0$ and $(W-J)J=0$.

For $(c)$, notice that the conditional expectation $\bE[Y_k^\T(W-J)(\partial_{k+1}-\eta\nabla_{k+1})|\cF_{k}]=0$. Moreover
\bea
&\bE[Y_k^\T(W-J)(\eta\nabla_k-\partial_k)|\cF_{k-1}]\\
&=\bE[(WY_{k-1}+\partial_k-\partial_{k-1})^\T(W-J)(\eta\nabla_k-\partial_k)|\cF_{k-1}]\\
&=\bE[\partial_k^\T(W-J)(\eta\nabla_k-\partial_k)|\cF_{k-1}]=\bE[(\partial_k-\eta\nabla_k)^\T(W-J)(\eta\nabla_k-\partial_k)|\cF_{k-1}]\leq \rho \sigma_s^2
\ena
Therefore,
\bea
&(c)=\bE[Y_k^\T(W-J)(\partial_{k+1}-\eta\nabla_{k+1}-(\partial_k-\eta\nabla_k))]\\
&=\bE_{\cF_{k}}[\bE[Y_k^\T(W-J)(\partial_{k+1}-\eta\nabla_{k+1})|\cF_{k}]]+\bE_{\cF_{k-1}}[\bE[Y_k^\T(W-J)(\eta\nabla_k-\partial_k)|\cF_{k-1}]]\\
&=\rho\sigma_s^2.
\ena

We can bound $(d)$ as follows:
\bea
(d)&=2\bE[Y_k^\T(W-J)(\eta\nabla_{k+1}-\eta\nabla_k)]\\
&\leq\frac{(1-\rho^2)\rho^2}{1+\rho^2}\bE[\|Y_k-\bone\bar\vy_k^\T\|_\sF^2]+\frac{(1+\rho^2)\eta^2}{1-\rho^2}\|\nabla_{k+1}-\nabla_k\|_\sF^2.
\ena
where the last term is bounded by \cref{eq5_s1}.

Finally, we consider $(e)$. Note that $\bE[(\partial_{k+1}-\eta\nabla_{k+1})(I-J)(\eta\nabla_{k+1}-\eta\nabla_k)|\cF_{k}]=0$ and $\bE[(\partial_k-\eta\nabla_k)(I-J)\eta\nabla_k|\cF_{k-1}]=0$. Moreover,
\bea
&\bE[(\partial_k-\eta\nabla_k)(I-J)\nabla_{k+1}|\cF_{k-1}]\\
&=\bE[(\partial_k-\eta\nabla_k)(I-J)(\nabla F(WX_k-\gamma_k W(WY_{k-1}+\partial_k-\partial_{k-1}))|\cF_{k-1}]\\
&=\bE[(\partial_k-\eta\nabla_k)(I-J)(\nabla F(U_k-\gamma_kW\partial_k)-\nabla F(U_k-\gamma_kW\eta \nabla_k))|\cF_{k-1}]\\
&\quad+\bE[(\partial_k-\eta\nabla_k)(I-J)\nabla F(U_k-\gamma_kW\eta\nabla_k)|\cF_{k-1}]\\
&=\bE[(\partial_k-\eta\nabla_k)(I-J)(\nabla F(U_k-\gamma_kW\partial_k)-\nabla F(U_k-\gamma_kW\eta\nabla_k))|\cF_{k-1}]\\
&\leq\bE[\gamma_kL\|W\|\|\partial_k-\eta\nabla_k\|_\sF^2|\cF_{k-1}]\leq \gamma_kL\sigma_s^2
\ena
where $U_k=WX_k-\gamma_k W(WY_{k-1}-\partial_{k-1})$ and we used $\|W\|\leq\sqrt{\|W\|_1\|W\|_\infty}=1$. Thus,
\bea
(e)&=\bE[(\partial_{k+1}-\eta\nabla_{k+1}-(\partial_k-\eta\nabla_k))(I-J)^2(\eta\nabla_{k+1}-\eta\nabla_k)]\\
&=\bE[\bE[(\partial_{k+1}-\eta\nabla_{k+1})(I-J)^2(\eta\nabla_{k+1}-\eta\nabla_k)|\cF_{k}]]\\
&\quad+\bE[\bE[(\partial_k-\eta\nabla_k)(I-J)^2\eta\nabla_k|\cF_{k-1}]]+\eta\bE[\bE[(\partial_k-\eta\nabla_k)(I-J)^2\nabla_{k+1}|\cF_{k-1}]]\\
&\leq \gamma_k\eta L\sigma_s^2
\ena

Combine the above relations with \cref{eq2_s1} implies the desired result.

\section{Proof of a result in Section \ref{subsection}}\label{adx_6}

Let $r_u^{(i)}=\nabla l(\vx_{i,k};d_u^{(i)})$, $\bar r_i=\nabla l(\bar\vx_{k};d_u^{(i)})$, $\vr=[r_1^{(1)},\cdots,r_{N_1}^{(1)},\cdots,r_1^{(n)},\cdots,r_{N_n}^{(n)}]^\T$ for any $i\in\cV$, and $\bar\vr=[\bar r_1^{(1)},\cdots,\bar r_{N_1}^{(1)},\cdots,\bar r_1^{(n)},\cdots,\bar r_{N_n}^{(n)}]^\T$. Let $J_N=I_N-\frac{1}{N}\bone\bone^\T$, $\tilde J_{N_i}=\diag(0,\cdots,J_{N_i},\cdots,0)$, i.e., $\tilde J_{N_i}\in\bR^{N\times N}$ is a $n$-block-diagonal matrix with $J_{N_i}$ on its $i$-th diagonal block. Let $\tilde J_N=\sum_{i=1}^n J_{N_i}$. We have from \cref{assum_cen} that
\bea
N\bE_{d\sim\cD}\Big[\Big\|\nabla_x l(\vx;d)-\sum_{u=1}^N \frac{\nabla_x l(\vx;d_u)}{N}\Big\|^2\Big]=\sum_{u=1}^{N}\Big\|\bar\vr(u)- \sum_{v=1}^N \frac{\bar\vr(v)}{N}\Big\|^2=\|J_N \bar\vr\|^2\leq \sigma^2 N
\ena
and hence we can set $\sigma^2=\max_{\bar\vx_k}\|J_N \bar\vr\|^2/N$. Moreover, it holds that
\bea
&\bE\Big[\sum_{i=1}^n\|\partial f_i(\vx_{i,k};\xi_{i,k})-\eta\nabla f_i(\vx_{i,k})\|^2\Big]=\sum_{i=1}^n\bE_{\xi_{i,k}\sim\cD_i}\Big[\Big\|\partial f_i(\vx_{i,k};\xi_{i,k})-\eta\nabla f_i(\vx_{i,k})\Big\|^2\Big]\\
&=\sum_{i=1}^n\bE_{\xi_{i,k}\sim\cD_i}\Big[\Big\|\sum_{u=1}^{\eta N_i}\nabla_x l(\vx_{i,k};\xi_{i,k}^{(u)})-\eta\nabla f_i(\vx_{i,k})\Big\|^2\Big]\\
&=\sum_{i=1}^n\bE_{\xi_{i,k}\sim\cD_i}\Big[\Big\|\sum_{u=1}^{\eta N_i}\Big(\nabla_x l(\vx_{i,k};\xi_{i,k}^{(u)})-\frac{1}{N_i}\nabla f_i(\vx_{i,k})\Big)\Big\|^2\Big]\\
&\overset{(a)}=\sum_{i=1}^n\eta N_i\bE_{d\sim\cD_i}\Big[\Big\|\nabla_x l(\vx_{i,k};d)-\frac{1}{N_i}\nabla f_i(\vx_{i,k})\Big\|^2\Big]\\
&=\sum_{i=1}^n\eta \sum_{u=1}^{N_i}\|\nabla_x l(\vx_{i,k};d_u^{(i)})-\sum_{v=1}^{N_i}\frac{\nabla_x l(\vx_{i,k};d_v^{(i)})}{N_i}\|^2=\sum_{i=1}^n\eta \|\tilde J_{N_i}\vr\|^2=\eta\|\tilde J_N\vr\|^2\leq\sigma_s^2.
\ena
where $\xi_{i,k}^{(u)}$ is the $u$-th sample in the mini-batch $\xi_{i,k}$ and $(a)$ used the independence of samples in $\xi_{i,k}$. The last equality follows from that $\sum_{i=1}^n \tilde J_{N_i}^2=\tilde J_N^2$. 

Note that $J_N-\tilde J_N$ is positive semidefinite. If $\rho=0$, we have $\vr=\bar\vr$ and we can set $\sigma_s^2=\max_{X_k}\eta\|\tilde J_N \vr\|^2$. Hence, $\sigma^2\geq\sigma_s^2/(\eta N)=\sigma_s^2/M$. Otherwise, we have
\bea
&\bE\Big[\sum_{i=1}^n\|\partial f_i(\vx_{i,k};\xi_{i,k})-\eta\nabla f_i(\vx_{i,k})\|^2|\cF_{k-1}\Big]\leq 2\eta \|\tilde J_N\bar\vr\|^2+2\eta\|\tilde J_N(\vr-\bar\vr)\|^2\\
&\leq 2\eta \|J_N\bar\vr\|^2+2\eta \|\vr-\bar\vr\|^2\leq 2\eta N\sigma^2+2\eta Lc_l\|X_k-\bone\bar\vx_k^\T\|^2
\ena
where we used \cref{A2} such that $\|\nabla l(\vx;d)-\nabla l(\vy;d)\|\leq c_l\|\vx-\vy\|$; We also used $\|\tilde J_N\|^2\leq\|\tilde J_N\|_1\|\tilde J_N\|_\infty=1$ and  
\bee
\|\vr-\bar\vr\|^2\leq c_l^2\sum_{i=1}^n\sum_{u=1}^{N_i}\|\vx_{i,k}-\bar\vx_k\|^2\leq c_l^2\max_i N_i\sum_{i=1}^n\|\vx_{i,k}-\bar\vx_k\|^2\leq c_lL\|X_k-\bone\bar\vx_k^\T\|^2.
\ene

\bibliographystyle{siamplain}
\bibliography{mybibf}
\end{document}